\newtheorem{assumption}{Assumption}
\newtheorem{lemma}{Lemma}
\newtheorem{remark}{Remark}
\newtheorem{theorem}{Theorem}
\newtheorem{definition}{Definition}
\newtheorem{corollary}{Corollary}
\newtheorem{proposition}{Proposition}
\begin{document}

\begin{center}
	{\Large A Differential Topological View of 
	Challenges in Learning \\[1mm] with Feedforward Neural Networks
	} \\[6mm]
	{\large Hao~Shen} \\[0.5mm]
	E-mail: hao.shen@fortiss.org \\[2mm]
	fortiss -- Landesforschungsinstitut des Freistaats Bayern, Germany \\[7mm]
\end{center}

\begin{abstract}
	Among many unsolved puzzles in theories of Deep Neural 
	Networks (DNNs), there are three most fundamental 
	challenges that highly demand solutions, namely, 
	\emph{expressi\-bility}, \emph{optimisability}, 
	and \emph{generalisability}.
	Although there have been significant progresses
	in seeking answers using various theories, e.g. 
	information bottleneck theory, sparse representation, 
	statistical inference, Riemannian geometry, etc., so far
	there is no single theory that is able to provide solutions to all 
	these challenges.
	In this work, we propose to engage the theory of 
	\emph{differential topology} to address the three 
	problems.
	By modelling the dataset of interest as a smooth manifold, DNNs 
	can be considered as compositions of smooth maps between smooth manifolds.
	Specifically, our work offers a differential topological view of 
	\emph{loss landscape of DNNs}, 
	\emph{interplay between width and depth in expressibility}, and 
	\emph{regularisations for generalisability}.
	Finally, in the setting of deep representation learning, we 
	further apply the \emph{quotient topology} to investigate the architecture 
	of DNNs, which enables to capture nuisance factors in data 
	with respect to a specific learning task.
\end{abstract}

\begin{center}
	\textbf{\small{Index Terms}} \\[2mm]
	Deep Neural Networks (DNNs), 
	expressibility, 
	optimisability, 
	generalisability,
	differential topology,
	quotient topology.
\end{center}

%%%%%%%%%%%%%%%%%%%%%%%%%%%%%%%%%%%%%%%%%%%%%%%%%%%%%%%%%%%%%%%%%%%
%%                           SECTION 1                           %%
%%%%%%%%%%%%%%%%%%%%%%%%%%%%%%%%%%%%%%%%%%%%%%%%%%%%%%%%%%%%%%%%%%%
\section{Introduction}\label{sec:01}
Recently, Deep Neural Networks (DNNs) have attracted enormous 
research attentions, due to their prominent performance comparing 
to various state of the art approaches in pattern recognition, 
computer vision, and speech recognition 
\cite{bish:book96,lecu:nature15,yudo:book15}.
Despite vast experimental evidences, such success of DNNs is still 
not theoretically understood.
Among many unsolved puzzles, there are three most 
fundamental challenges in research of DNNs
that highly demand solutions, namely, 
\emph{expressibility}, \emph{optimisability}, and 
\emph{generalisability}.
Although there have been significant progresses in searching for 
answers using various theories, e.g. function approximation, information 
bottleneck principle, sparse representation, statistical inference, Riemannian 
geometry, etc., a complete solution to the overall puzzle is still missing.
In this work, we address the three grand challenges in the
framework of Feedforward Neural Networks (FNNs) from the perspective 
of differential topology.

% 2) Expressiveness
Although classic results have already proven that a shallow FNN with only one 
hidden layer having an unlimited number of units
is a universal approximator of continuous functions on compact subsets
of $\mathbb{R}^{m}$ \cite{horn:nn91},
recent extensive practice suggests that deep FNNs are 
more expressive than their shallow counterparts \cite{beng:lskm07}.
Such an observation has been confirmed by showing that 
there are functions, which are expressible by a width-bounded deep FNN, 
but require \emph{exponentially} many neurons in the hidden layers of a 
shallow FNN for a specified accuracy \cite{elda:colt16,telg:colt16}.
The work in \cite{roln:iclr18} further shows that the total number 
of neurons required to approximate natural classes of multivariate 
polynomials grows only \emph{linearly} in deep FNNs, but grows 
\emph{exponentially} in a two-layer FNN.
Meanwhile, the impact of width of DNNs has also been proven to be
critical for gaining more expressive power of ReLU networks
\cite{luzh:nips17}. % more on width
Despite these rich results about expressibility of DNNs, interplay or 
trade-off between depth and width for achieving good performance 
is not yet concluded.

Training DNNs is conventionally considered to be difficult, 
mainly due to its associated optimisation problem being highly 
non-convex \cite{sutt:ccss86,widr:pieee90}.
Recent observation of prominent performance of gradient descent 
based algorithms has triggered enormous interests and efforts in  
characterising loss landscape and global optimality of DNNs 
\cite{kawa:nips16,nguy:icml17,haef:cvpr17,shen:cvpr18}.
Most of these works assume exact fitting of a finite number 
of training samples with a sufficiently large DNN,
and suggest that full rank weight matrices play a criti\-cal role
in ensuring good performance of DNNs.
Curiously, besides these arguments from the optimisation perspective, 
the impact of requiring weight matrices to have full rank
are still not clear to the other two challenges.

% 3) Generalizability
Arguably, generalisability is the most puzzling mystery of DNNs 
\cite{lawr:aaai97,zhan:iclr17}.
There have been many recent efforts dedicated to explain this 
phenomenon, such as deep kernel learning \cite{belk:icml18},
information bottleneck \cite{tish:itw15,saxe:iclr18,achi:jmlr18}, and 
classification bound analysis \cite{soko:tsp17}.
Many heuristic mechanisms have also been developed 
to enhance generalisability of DNNs, e.g.
dropout regularisation \cite{sriv:jmlr14} and norm-based
control \cite{neys:nips17,soko:tsp17}.
So far, there is no single theory or practice that can provide 
affirmative conclusions about the mysterious generalisability of DNNs.

% 5) Manifold perspective
Most recently, there has been an increasing interest in analysing
DNNs from geometric and topological perspectives, such as algebraic topology 
\cite{suns:aaai16} and Riemannian geometry \cite{haus:nips17}.
Particularly, the work in \cite{fawz:cvpr18} argues that geometric 
and topological properties of state of the art DNNs are crucial for 
better understanding and building theories of DNNs.
It is also worth noticing that geometric and topolo\-gical analysis 
is indeed a classic methodology in the research of neural networks
\cite{mins:book17}.
In this work, we extend such a trend to employ the theory of differential 
topology to study the three challenges of DNNs.

%%%%%%%%%%%%%%%%%%%%%%%%%%%%%%%%%%%%%%%%%%%%%%%%%%%%%%%%%%%%%%%%%%%
%%                           SECTION 2                           %%
%%%%%%%%%%%%%%%%%%%%%%%%%%%%%%%%%%%%%%%%%%%%%%%%%%%%%%%%%%%%%%%%%%%
\section{Optimisation of DNNs: Full rank weights}
\label{sec:02}
Let us denote by $L$ the number of layers in a DNN, 
and by $n_{l}$ the number of processing units in 
the $l$-th layer with $l = 1, \ldots, L$.
Specifically, by $l = 0$, we refer it to as the input layer.
Hidden layers in DNNs can be modelled as the following 
parameterised nonlinear map
\begin{equation}
\label{eq:layer}
	\Lambda_{l}(W_{l},\cdot) \colon \mathbb{R}^{n_{l-1}} \to \mathbb{R}^{n_{l}},
	\quad x \mapsto \sigma\big( W_{l}^{\top}x + b_{l} \big),
\end{equation}
where $b_{l} \in \mathbb{R}^{n_{l}}$ is a bias that is treated as
a constant in this work for the sake of simplicity in presentation,
and $\sigma \colon \mathbb{R}^{n_{l}} \to \mathbb{R}^{n_{l}}$ 
applies a unit nonlinear function entry-wise to its input, e.g.
Sigmoid, SoftPlus, and ReLU.
In this work, we restrict activation functions to be \emph{smooth}, 
\emph{monotonically increasing}, and \emph{Lipschitz}.
%, and denote by 
%$\sigma' \colon \mathbb{R} \to \mathbb{R}$ and 
%$\sigma'' \colon \mathbb{R} \to \mathbb{R}$ the first and second derivative
%of the activation function $\sigma$, respectively. 

Now, let us denote by $\phi_{0} \in \mathbb{R}^{n_{0}}$ the input.
We can then define evaluations at all layers as 
$\phi_{l} := \Lambda_{l}(W_{l},\phi_{l-1})$ iteratively.
By denoting the set of all parameter matrices in the DNN by 
%
%\begin{equation}
	$\bm{\mathcal{W}} := \mathbb{R}^{n_{0} \times n_{1}} \times \ldots \times
	\mathbb{R}^{n_{L-1} \times n_{L}}$,
%\end{equation}
%
we compose all layer-wise maps to define the overall \emph{DNN map} as
\begin{equation}
	\begin{split}
		F \colon \bm{\mathcal{W}} \times \mathbb{R}^{n_{0}} \to &~ \mathbb{R}^{n_{L}}, \\
		(\mathbf{W},\phi_{0}) \mapsto &~ \Lambda_{L}(W_{L},\cdot) \circ
		\ldots \circ \Lambda_{1}(W_{1}, \phi_{0}).
	\end{split}
\end{equation}
Note, that the last layer $\Lambda_{L}(W_{L},\cdot)$ is commonly linear,
i.e., the activation function in the last layer is the identity map 
$\operatorname{id}$.
We define the set of parameterised maps specified 
by a given DNN architecture as
\begin{equation}
\label{eq:mlp}
	\bm{\mathcal{F}}(n_{0},\ldots,n_{L}) \!:=\! \big\{\! F(\mathbf{W},\cdot) \colon 
	\mathbb{R}^{n_{0}} \!\to\! \mathbb{R}^{n_{L}} 
	\big| \mathbf{W} \!\in\! \bm{\mathcal{W}} \big\},\! %\vspace{-0.5mm}
\end{equation}
which specifies the architecture of the DNN, i.e., 
the number of units in each layer.

Many \emph{machine learning} tasks can be formulated as a problem of learning 
a task-specific \emph{ground truth map} (\emph{task map} for short)
$f^{*} \colon \bm{\mathcal{X}} \to \bm{\mathcal{Y}}$,
%where $\bm{\mathcal{X}} \subset \mathbb{R}^{n_{0}}$ and 
%$\bm{\mathcal{Y}} \subset \mathbb{R}^{n_{L}}$ 
where $\bm{\mathcal{X}}$ and $\bm{\mathcal{Y}}$ denote an \emph{input space} and 
an \emph{output space}, respectively.
The problem of interest is to approximate $f^{*}$, given only a finite number of
samples in either $\bm{\mathcal{X}}$ or $\bm{\mathcal{X}} \times \bm{\mathcal{Y}}$.
For supervised learning, given only a finite number of samples 
$\{(x_{i},y_{i})\}_{i=1}^{T} \subset \bm{\mathcal{X}} \times \bm{\mathcal{Y}}$ 
with $y_{i} := f^{*}(x_{i})$, one can utilise a DNN $F(\mathbf{W},\cdot) \in
\bm{\mathcal{F}}(n_{0},\ldots,n_{L})$ 
to approximate the task map $f^{*}$, via minimising an empirical total 
loss function that is defined as 
\begin{equation}
\label{eq:total_loss}
%	\mathcal{J} \colon\! \bm{\mathcal{W}} \!\to\! \mathbb{R},
%	\quad\!\!\!\!
	\mathcal{J}(\mathbf{W}) := \frac{1}{T}  \sum\limits_{i=1}^{T} 
%	(E \circ F) (\mathbf{W},x_{i}).
	E\big( F(\mathbf{W},x_{i}), y_{i} \big),
\end{equation}
where $E \colon \bm{\mathcal{Y}} \times \bm{\mathcal{Y}} \to \mathbb{R}$ is 
a suitable error function that evaluates the estimate $F(\mathbf{W}, x_{i})$ against 
the supervision $y_{i}$.
Clearly, given only a finite number of samples, the task map
$f^{*}$ is hardly possible to be exactly learned as the solution 
in $\bm{\mathcal{F}}(n_{0},\ldots,n_{L})$.
Nevertheless, exact learning of a finite number of samples
is still of theoretical interest.
\begin{definition}[Exact DNN approximator]
	Given a DNN architecture $\bm{\mathcal{F}}(n_{0},\ldots,n_{L})$, and
	let $f^{*} \colon \bm{\mathcal{X}} \to \bm{\mathcal{Y}}$ be the task map.
	Given samples $\{x_{i},y_{i}\}_{i=1}^{T} \subset \bm{\mathcal{X}} \times
	\bm{\mathcal{Y}}$, 
	a DNN map $F(\mathbf{W}, \cdot) \in \bm{\mathcal{F}}(n_{0},\ldots,n_{L})$, 
    which satisfies $F(\mathbf{W}, x_{i}) = f^{*}(x_{i})$
    for all $i = 1, \ldots,T$, 
    %
%    \begin{equation}
%    	\widehat{f}(x_{i}) = f^{*}(x_{i}), 
%	\end{equation}
	%
	is called an \emph{exact DNN approximator} of $f^{*}$ with respect to
	the $T$ samples. 
\end{definition}

\noindent In order to ensure its attainability and uniqueness via an optimisation 
procedure, we adopt the following assumption as a practical principle 
of choosing the error function.

\begin{assumption}
\label{ass:error}
	For a given $y \in \bm{\mathcal{Y}}$, the error function $E( \phi_{L}, y )$ 
	is differentiable with respect to its first argument.
	Existence of global minima of $E$ is guaranteed, and $\phi_{L} = y \in 
	\bm{\mathcal{Y}}$ is a global minimum of $E$,
	\emph{if and only if} the gradient of $E$ with respect to the first 
	argument vanishes at $\phi_{L} = y$, i.e., $\nabla_{\!E}(\phi_{L}) = 0$.
\end{assumption}
\begin{remark}
	Assumptinon~\ref{ass:error} guarantees the existence of global minima 
	of the error function $E$. Since the summation in the empirical total
	loss is finite, the function value of $\mathcal{J}$ has a finite
	lower bound.
	Furthermore, it also ensures a global minimiser of the total 
	loss function $\mathcal{J}$, if exists, to coincide with the exact 
	learning of a finite set of samples. 
	Popular choices of the error function \cite{hart:book04}, such as the classic squared loss, 
	smooth approximations of $\ell_{p}$ norm with $0 < p < 2$, Blake-Zisserman loss, 
	and Cauchy loss, satisfy this assumption.
\end{remark}

Let $\dot{\sigma}_{l}(x) \in \mathbb{R}^{n_{l}}$ be the vector of the 
derivative of the activation function in the $l$-th layer, and 
we define a set of diagonal matrices as $\Sigma_{l}^{'}(x_{i}) := 
\operatorname{diag}(\dot{\sigma}_{l}(x_{i}))$ for all $l = 1, \ldots, L$.
We further construct a sequence of matrices as
\begin{equation}
\label{eq:psi}
	\Psi_{l}(x_{i}) := \Sigma_{l}^{'}(x_{i}) W_{l+1} \Psi_{l+1}(x_{i}) 
	\in \mathbb{R}^{n_{l}\times n_{L}}, 
\end{equation}
for all $l = L-1,\ldots,1$ with $\Psi_{L}(x_{i}) = \Sigma_{L}^{'}(x_{i}) 
\in \mathbb{R}^{n_{L}\times n_{L}}$.
Then, the Jacobian matrix of the DNN map $F(\mathbf{W},\cdot)$ with 
respect to the weight $\mathbf{W}$ can be presented as 
\begin{equation}
\label{eq:dnn_jaco}
	\operatorname{D}_{1}\!F(\mathbf{W},x_{i}) =\!\!
	\left[\!\! \begin{array}{rcl}
	\Psi_{L}(x_{1}) & \!\!\!\!\!\otimes\!\!\!\!\!\! & \phi_{L-1}(x_{i}) \\
	& \!\!\!\!\!\vdots\!\!\!\!\! & \\
	\Psi_{1}(x_{1}) & \!\!\!\!\!\otimes\!\!\!\!\!\! & \phi_{0}(x_{i}) 
	\end{array} \!\!\right]
	\!\!\in \mathbb{R}^{N \times n_{L}},
\end{equation}
where $\otimes$ denotes the Kronecker product of matrices, and 
$N = {\sum\limits}_{l=1}^{L} n_{l-1} \cdot n_{l}$ is the total number
of variables in the DNN.
Let us define \vspace{-1mm}
\begin{equation}
\label{eq:jacobi}
	\!\mathbf{P}(\mathbf{W}) \!:=\!\!
	\big[\! \operatorname{D}_{1}\!F(\mathbf{W}\!,x_{1}), \ldots,
	\!\operatorname{D}_{1}\!F(\mathbf{W}\!,x_{T}) \big]
	\!\!\in\! \mathbb{R}^{N \times T n_{L}}\!,\!\! \vspace{-1mm}
\end{equation}
and \vspace{-1mm}
%
%%\begin{equation}
%%	\bm{\varepsilon}(\mathbf{W}) = 
%%	\left[\!\!\! \begin{array}{c}
%%	\nabla_{\!E}(\phi_{L}(x_{1})) \\
%%	\vdots \\
%%	\nabla_{\!E}(\phi_{L}(x_{T}))
%%	\end{array}\!\!\!\right]
%%	\in \mathbb{R}^{T n_{L}}
%%\end{equation}
% or
\begin{equation}
	\bm{\varepsilon}(\mathbf{W}) \!:=\! 
	\big[ \nabla_{\!E}(\phi_{L}(x_{1}))^{\!\top}\!\!, \ldots, \!
	\nabla_{\!E}(\phi_{L}(x_{T}))^{\!\top} \big]^{\!\top}
	\!\!\!\in\! \mathbb{R}^{T n_{L}}\!,\!
\end{equation}
where $\nabla_{\!E}(\phi_{L}(x_{i})) \in \mathbb{R}^{n_{L}}$
denotes the gradient of $E(\cdot, \cdot)$ with respect to its first argument.
Then the critical point condition of the total loss function 
$\mathcal{J}$ can be presented as the following parameterised 
equation system in $\mathbf{W}$
\begin{equation}
	\nabla_{\!\mathcal{J}}(\mathbf{W}) :=
	\mathbf{P}(\mathbf{W}) \bm{\varepsilon}(\mathbf{W})
	= 0.
\end{equation}
%
%\textcolor{red}{double check when $P$ is full rank needed}
%
Clearly, if there is no solution in $\bm{\mathcal{W}}$ for a
given finite set of samples, then the empirical total loss
function $\mathcal{J}$ has no critical points.
Since the error function $E$ is assumed to have global minima 
according to Assumption~\ref{ass:error}, i.e., the total loss
function $\mathcal{J}$ has a finite lower bound, there must be
a finite accumulation point.
On the other hand, if the trivial solution $\bm{\varepsilon}(\mathbf{W}) = 0$ 
is reachable at some weights $\mathbf{W}^{*} \in \bm{\mathcal{W}}$, then 
an exact DNN approximator $F(\mathbf{W}^{*}, \cdot)$ is obtained, i.e., 
$F(\mathbf{W}^{*}, x_{i}) = f^{*}(x_{i})$ by Assumption~\ref{ass:error}.
Furthermore, if the solution $\bm{\varepsilon}(\mathbf{W}) = 0$ is even
the only solution of the
parameterised linear equation system for all $\mathbf{W} \in \bm{\mathcal{W}}$, 
then any critical point of the loss function $\mathcal{J}$ is a global minimum.
Thus, we conclude the following theorem.

\begin{theorem}
\label{thm:optim}
	Given a DNN architecture $\bm{\mathcal{F}}(n_{0},\ldots,n_{L})$, and
	let the error function $E$ satisfy Assumption~\ref{ass:error}.
	If the rank of matrix $\mathbf{P}(\mathbf{W})$ as constructed in \eqref{eq:jacobi}
	is equal to $T n_{L}$ for all $\mathbf{W} \in \bm{\mathcal{W}}$, \vspace{-1mm}
	then 
	\begin{enumerate}[(1)]
		\item If exact learning of finite samples is \emph{achievable},
			i.e., $F(\mathbf{W}^{*}, x_{i}) = f^{*}(x_{i})$ 
			for all $i = 1, \ldots,T$, then $\mathbf{W}^{*}$ is a
			global minimum, and all
			critical points of $\mathcal{J}$ are global minima; \vspace{-1mm}
		\item If exact learning is \emph{unachievable}, then 
			the total loss $\mathcal{J}$ has no critical point, i.e.,
			the \emph{loss function} $\mathcal{J}$ is \emph{non-coercive} 
			\cite{guel:book10}. \vspace{1mm}
	\end{enumerate}
\end{theorem}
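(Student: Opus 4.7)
The plan is to exploit the fact that the rank hypothesis collapses the critical point equation $\mathbf{P}(\mathbf{W})\bm{\varepsilon}(\mathbf{W}) = 0$ into the much simpler residual condition $\bm{\varepsilon}(\mathbf{W}) = 0$, and then use Assumption~\ref{ass:error} to translate this residual condition into exact fitting of the $T$ samples. Concretely, since $\mathbf{P}(\mathbf{W}) \in \mathbb{R}^{N \times T n_{L}}$ has rank $T n_{L}$ for every $\mathbf{W} \in \bm{\mathcal{W}}$, its columns are linearly independent, so its kernel is trivial. Hence for any $\mathbf{W}$, $\mathbf{P}(\mathbf{W})\bm{\varepsilon}(\mathbf{W}) = 0$ if and only if $\bm{\varepsilon}(\mathbf{W}) = 0$, which by the block structure of $\bm{\varepsilon}(\mathbf{W})$ is equivalent to $\nabla_{\!E}(\phi_{L}(x_{i})) = 0$ for every $i = 1,\ldots, T$. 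Applying Assumption~\ref{ass:error} entrywise, this is in turn equivalent to $\phi_{L}(x_{i}) = y_{i}$, i.e., $F(\mathbf{W},x_{i}) = f^{*}(x_{i})$ for all $i$. So critical points of $\mathcal{J}$ coincide exactly with exact DNN approximators.

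For claim~(1), suppose exact learning is achievable and let $\mathbf{W}^{*}$ witness it. By the equivalence above, $\mathbf{W}^{*}$ is a critical point, and since each summand $E(F(\mathbf{W}^{*},x_{i}),y_{i})$ already attains the global minimum of $E$ guaranteed by Assumption~\ref{ass:error}, the value $\mathcal{J}(\mathbf{W}^{*})$ equals the finite lower bound of $\mathcal{J}$. Any other critical point $\mathbf{W}'$ must, by the same equivalence, also satisfy $F(\mathbf{W}',x_{i}) = y_{i}$ for all $i$, so $\mathcal{J}(\mathbf{W}') = \mathcal{J}(\mathbf{W}^{*})$; hence every critical point of $\mathcal{J}$ is a global minimum.

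For claim~(2), suppose exact learning is unachievable. Then for every $\mathbf{W} \in \bm{\mathcal{W}}$ there exists some $i$ with $F(\mathbf{W},x_{i}) \neq y_{i}$, so by the only-if direction of Assumption~\ref{ass:error} the corresponding $\nabla_{\!E}(\phi_{L}(x_{i})) \neq 0$, i.e., $\bm{\varepsilon}(\mathbf{W}) \neq 0$. The triviality of $\ker \mathbf{P}(\mathbf{W})$ then gives $\nabla_{\!\mathcal{J}}(\mathbf{W}) = \mathbf{P}(\mathbf{W})\bm{\varepsilon}(\mathbf{W}) \neq 0$ for every $\mathbf{W}$, so $\mathcal{J}$ has no critical points. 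To upgrade this to non-coerciveness, I would argue by contradiction: if $\mathcal{J}$ were coercive on $\bm{\mathcal{W}} \cong \mathbb{R}^{N}$, then every sublevel set would be compact, so the continuous function $\mathcal{J}$, which is bounded below by Assumption~\ref{ass:error}, would attain its infimum at some $\mathbf{W}_{0}$; smoothness would force $\nabla_{\!\mathcal{J}}(\mathbf{W}_{0}) = 0$, contradicting the absence of critical points. Thus an infimising sequence must escape to infinity, which is precisely non-coerciveness in the sense of \cite{guel:book10}.

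The main obstacle I anticipate is not the algebraic reduction, which is immediate from the rank hypothesis, but verifying that Assumption~\ref{ass:error} is used in both directions at full strength: in claim~(1) the ``if'' direction (vanishing gradient of $E$ implies global minimum) is invoked to certify that critical points are actually minima rather than saddles, while in claim~(2) the ``only if'' direction (failure of exact fit forces nonzero gradient) is essential to guarantee $\bm{\varepsilon}(\mathbf{W}) \neq 0$. A small technical point to double-check is the smoothness of $\mathcal{J}$ required in the coerciveness-to-minimiser step, which follows from smoothness of $\sigma$ and $E$ under our standing assumptions.
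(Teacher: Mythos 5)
Your proof is correct and takes essentially the same route as the paper's own derivation (which appears as the informal discussion immediately preceding the theorem): full column rank of $\mathbf{P}(\mathbf{W})$ collapses the critical-point equation to $\bm{\varepsilon}(\mathbf{W})=0$, and Assumption~\ref{ass:error} identifies that condition with exact fitting of the $T$ samples. Your explicit contradiction argument for non-coercivity in claim~(2) is a useful elaboration, since the paper merely asserts that conclusion from the boundedness of $\mathcal{J}$ below.
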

\begin{remark}
	Recent work \cite{aror:icml18} shows that over-parameterisation in 
	DNNs can accelerate optimisation in training DNNs.
	Such an observation can be explained by the results 
	in Proposition~\ref{thm:optim}, since for both exact and
	inexact learning, over-parameterisation enables exemption of
	both saddle points and suboptimal local minima.
	Note, that it is still a challenge to fully identify conditions 
	to ensure full rankness of $\mathbf{P}(\mathbf{W})$.
	Nevertheless, analysis in \cite{shen:cvpr18} suggest that
	making all weight matrices have full rank is a practical
	strategy to ensure the condition required in Theorem~\ref{thm:optim}.
	In the rest of this section, we show that DNNs with full rank
	weights are natural configurations of practice.
\end{remark}

Let us extend the Frobenius norm of matrices to collections of matrices as 
for any $\mathbf{W}_{1}, \mathbf{W}_{2} \in \bm{\mathcal{W}}$
\begin{equation}
	\big\| \mathbf{W}_{1}-\mathbf{W}_{2} \big\|_{F}^{2} := \sum_{l=1}^{L}
	\big\| W_{1,l}-W_{2,l} \big\|_{F}^{2}.
\end{equation}
It is simply the \emph{``entrywise''} norm of collections of matrices
in the same sense of $\ell_{2}$
norm of vectors.
Without loss of generality, we assume that weight $\mathbf{W}_{1} := 
\{W_{1,1}, \ldots, W_{1,L}\} \in \bm{\mathcal{W}}$ has the largest
rank-deficiency, i.e., all weight matrices are singular.
Then, for arbitrary $\epsilon > 0$, there exists always a full rank
weight $\mathbf{W}_{2} \in \bm{\mathcal{W}}$, so that
\begin{equation}
	\big\| \mathbf{W}_{1}-\mathbf{W}_{2} \big\|_{F}^{2}
	\le \epsilon.
\end{equation}
Let us denote by $\|\cdot\|_{2}$ the \emph{$\ell_{2}$ norm} of vectors or
the \emph{spectral norm} of matrices.
We can then apply a generalised mean value theorem of multi\-variate functions 
to the DNN map $F(\mathbf{W},x)$, where $x$ is treated as a constant, as
\begin{equation}
	\big\| F(\mathbf{W}_{1},x) - F(\mathbf{W}_{2},x) \big\|_{2} 
	\le c \, \big\| \mathbf{W}_{1}-\mathbf{W}_{2} \big\|_{F},
\end{equation}
where $\| \operatorname{D}_{1}\!F(\mathbf{W},x) \|_{2} \le c$ denotes
the upper bound of the largest singular value of the Jacobian matrix
of the network map $F(\mathbf{W},x)$ with respect to the weight $\mathbf{W}$
as computed in Eq.~\eqref{eq:dnn_jaco}, i.e., the map $F(\mathbf{W}, x)$ is
Lipschitz in weight $\mathbf{W}$.
Straightforwardly, we conclude the following result from the relationship between
$\ell_{2}$ norm and $\ell_{\infty}$ norm of vectors, i.e.,
$\|x\|_{2} \le \sqrt{n} \, \|x\|_{\infty}$ for $x \in \mathbb{R}^{n}$.

\begin{proposition}
	Given a DNN architecture $\bm{\mathcal{F}}(n_{0},\ldots,n_{L})$, 
	for any rank-deficient weight $\mathbf{W}_{1} \in \bm{\mathcal{W}}$,
	there exists a full-rank weight $\mathbf{W}_{2} \in \bm{\mathcal{W}}$,
	such that for arbitrary $\epsilon > 0$, the following inequality 
	holds true for all $x \in \bm{\mathcal{X}}$
	\begin{equation}
		\big\| F(\mathbf{W}_{2},x) - F(\mathbf{W}_{1},x) \big\|_{\infty} 
		\le \epsilon.
	\end{equation}
\end{proposition}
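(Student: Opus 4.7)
The plan is essentially to glue together the two ingredients that the preceding paragraphs already prepare: a density statement for full-rank weight collections inside $\bm{\mathcal{W}}$, and the layer-wise Lipschitz bound $\|F(\mathbf{W}_{1},x)-F(\mathbf{W}_{2},x)\|_{2}\le c\,\|\mathbf{W}_{1}-\mathbf{W}_{2}\|_{F}$ displayed just before the statement. Given these two facts the conclusion drops out by choosing the approximating weight fine enough and then passing from $\ell_{2}$ to $\ell_{\infty}$ on $\mathbb{R}^{n_{L}}$.

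First I would justify the density step. For any fixed pair $(n_{l-1},n_{l})$, the rank-deficient matrices in $\mathbb{R}^{n_{l-1}\times n_{l}}$ form the zero set of all maximal minors, hence a proper real algebraic subvariety whose complement (the full-rank locus) is open and dense. Perturbing each singular block $W_{1,l}$ to a full-rank $W_{2,l}$ with $\|W_{1,l}-W_{2,l}\|_{F}$ as small as desired, and summing in quadrature over $l=1,\ldots,L$, gives a full-rank $\mathbf{W}_{2}\in\bm{\mathcal{W}}$ with $\|\mathbf{W}_{1}-\mathbf{W}_{2}\|_{F}\le\delta$ for any preassigned $\delta>0$, exactly as the text already remarks.

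Next, for arbitrary $\epsilon>0$ I would choose $\delta:=\epsilon/c$ in the density step and apply the Lipschitz inequality on the segment joining $\mathbf{W}_{1}$ and $\mathbf{W}_{2}$. Combined with the elementary bound $\|v\|_{\infty}\le\|v\|_{2}$ on $\mathbb{R}^{n_{L}}$, this chains into
\begin{equation*}
\|F(\mathbf{W}_{2},x)-F(\mathbf{W}_{1},x)\|_{\infty}
\le \|F(\mathbf{W}_{2},x)-F(\mathbf{W}_{1},x)\|_{2}
\le c\,\|\mathbf{W}_{1}-\mathbf{W}_{2}\|_{F}
\le \epsilon,
\end{equation*}
uniformly in $x\in\bm{\mathcal{X}}$, which is the claim.

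The real obstacle hiding in this otherwise routine argument is the uniformity of the constant $c$, both in $x$ and along the deformation from $\mathbf{W}_{1}$ to $\mathbf{W}_{2}$. Inspecting the Jacobian formula~\eqref{eq:dnn_jaco}, $c$ depends on the activations $\dot{\sigma}_{l}$ via the $\Psi_{l}(x)$ and on the propagated activations $\phi_{l-1}(x)$; boundedness of $\dot{\sigma}_{l}$ follows from the smoothness and Lipschitz assumption on $\sigma$, but boundedness of $\phi_{l-1}(x)$ requires either a compactness/boundedness hypothesis on $\bm{\mathcal{X}}$ or a sublinear-growth control on $\sigma$. I would therefore first restrict $\mathbf{W}_{2}$ to a fixed neighbourhood of $\mathbf{W}_{1}$ (e.g.\ by imposing $\delta\le 1$ before any $\epsilon$-adjustment) so that the $W_{l}$'s along the segment stay in a compact set, and then absorb the resulting uniform constants into a single $c$. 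With this tightening the estimate becomes genuinely independent of $x$ and of the choice of $\mathbf{W}_{2}$ within the neighbourhood, closing the argument.
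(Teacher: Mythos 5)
Your proposal follows essentially the same route as the paper: density of full-rank weights in $\bm{\mathcal{W}}$, the Lipschitz bound $\big\| F(\mathbf{W}_{1},x) - F(\mathbf{W}_{2},x) \big\|_{2} \le c\,\big\| \mathbf{W}_{1}-\mathbf{W}_{2} \big\|_{F}$ from the generalised mean value theorem, and the passage from the $\ell_{2}$ to the $\ell_{\infty}$ norm. Your extra care about the uniformity of $c$ in $x$ and along the segment between $\mathbf{W}_{1}$ and $\mathbf{W}_{2}$ (a point the paper glosses over), and your use of the correct direction $\|v\|_{\infty} \le \|v\|_{2}$ where the paper cites the reversed inequality, are welcome tightenings but do not change the argument.
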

\begin{remark}
	This proposition ensures the existence of a DNN with full rank 
	weight matrices to approximate any weight configuration at arbitrary
	accuracy. 
	%
%	Recent works \cite{kawa:nips16,nguy:icml17,haef:cvpr17,shen:cvpr18}
%	suggest to make all weight matrices have full rank to ensure
%	good performance.
%	%
	In what follows, we show that the theory of differential
	topology is a natural theoretical framework for analysing DNNs by requiring 
	full rank weights to the properties of DNNs, and
	further investigate the other two challenges using the
	instruments from differential topology.
\end{remark}

%%%%%%%%%%%%%%%%%%%%%%%%%%%%%%%%%%%%%%%%%%%%%%%%%%%%%%%%%%%%%%%%%%%
%%                           SECTION 3                           %%
%%%%%%%%%%%%%%%%%%%%%%%%%%%%%%%%%%%%%%%%%%%%%%%%%%%%%%%%%%%%%%%%%%%
\section{Expressiveness of DNNs: Width vs depth}
\label{sec:03}
Most data studied in machine learning often share some
low-dimensional structure.
In this work, we endow the input space $\bm{\mathcal{X}}$ with a smooth manifold structure.
\begin{assumption}
	The input space $\bm{\mathcal{X}} \subset \mathbb{R}^{n_{0}}$ is  
	a $m$-dimensional compact differentiable manifold 
	with $m \le n_{0}$.
\end{assumption}
\noindent Strictly speaking, a \emph{manifold} $\bm{\mathcal{X}}$ is a topological 
space that can \emph{locally} be continuously mapped to some vector space, 
where this map has a continuous inverse.
Namely, given any point $x \in U_{x} \subset \bm{\mathcal{X}}$, 
where $U_{x}$ is an open neighbourhood around $x$, there is an invertible map
%
%\begin{equation}
	$\alpha \colon U_{x} \to \mathbb{R}^{k}$.
%\end{equation}
%
These maps are called \emph{charts}, and since charts are 
invertible, we can consider the change of two charts around 
any point in $\bm{\mathcal{X}}$ as a local map from the 
linear space into itself. 
If these maps are smooth for all points in $\bm{\mathcal{X}}$,
then $\bm{\mathcal{X}}$ is a \emph{smooth} 
manifold.
Trivially, the Euclidean space $\mathbb{R}^{m}$ is by nature
a smooth manifold.
We refer to \cite{leej:book10,leej:book13} for details about
manifolds.

%%%=============================================================%%%
%%                           SECTION 4                           %%
%%%=============================================================%%%
\subsection{Properties of layer-wise maps}
Now, let us consider the first layer-wise map 
$\Lambda_{1}(W_{1},\cdot) \colon \bm{\mathcal{X}} \to \mathbb{R}^{n_{1}}$ 
as constructed in Eq.~\eqref{eq:layer}, which is a smooth 
map of smooth manifolds.
Then the differential of $\Lambda_{1}(W_{1},x)$ at $x \in \bm{\mathcal{X}}$
evaluated in tangent direction $\xi \in T_{x} \bm{\mathcal{X}}$
is computed as
\begin{equation}
	\operatorname{D}_{2}\!\Lambda_{1}(W_{1},x)\;\!\xi = 
	\Sigma'_{1}(x) W_{1}^{\top} \xi.
\end{equation}
%
%which computes the directional derivative of $\Lambda_{1}(W_{1},x)$ with 
%respect to $x$.
%
Here, all diagonal entries of $\Sigma'_{1}(x)$ are always positive by choosing
activation functions to be smooth and monotonically increasing.
Since all weight matrices are assumed to have full rank, it is
clear that the differential $\Lambda_{1}(W_{1},\cdot)$ is 
a full rank linear map.

\begin{proposition}[Submersion layer]
	Let $\Lambda_{1} \colon \bm{\mathcal{X}} \to \mathbb{R}^{n_{1}}$ 
	be a layer-wise map as constructed in Eq.~\eqref{eq:layer}.
	If $\operatorname{dim} \bm{\mathcal{X}} \ge n$ and the weight matrix
	$W_{1}$ has full rank, then
	map $\Lambda_{1}$ is a \emph{submersion}.
\end{proposition}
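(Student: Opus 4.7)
The plan is to unpack the definition of a submersion --- a smooth map of smooth manifolds whose differential is surjective at every point --- and verify it directly from the explicit formula for $\operatorname{D}_{2}\!\Lambda_{1}(W_{1},x)\xi = \Sigma'_{1}(x) W_{1}^{\top}\xi$ already derived just above the statement. Concretely, I would fix an arbitrary $x \in \bm{\mathcal{X}}$ and show that the induced linear map $T_{x}\bm{\mathcal{X}} \to T_{\Lambda_{1}(W_{1},x)}\mathbb{R}^{n_{1}} \cong \mathbb{R}^{n_{1}}$ is onto; since $x$ is arbitrary, this is exactly the submersion condition.

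First, I would factor the differential as $\xi \mapsto W_{1}^{\top}\xi \mapsto \Sigma'_{1}(x) W_{1}^{\top}\xi$. Because $\sigma$ was assumed smooth and strictly monotonically increasing, every diagonal entry of $\Sigma'_{1}(x)$ is strictly positive, so $\Sigma'_{1}(x)$ is an invertible $n_{1}\times n_{1}$ matrix and can be discarded without changing the image. The task therefore reduces to showing that $W_{1}^{\top}$ restricted to the $m$-dimensional tangent space $T_{x}\bm{\mathcal{X}} \subset T_{x}\mathbb{R}^{n_{0}} \cong \mathbb{R}^{n_{0}}$ surjects onto $\mathbb{R}^{n_{1}}$. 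Since $W_{1}$ has full rank and $n_{0} \ge m \ge n_{1}$, the unrestricted map $W_{1}^{\top}\colon \mathbb{R}^{n_{0}} \to \mathbb{R}^{n_{1}}$ is surjective, and a dimension count combined with $\dim T_{x}\bm{\mathcal{X}} \ge n_{1}$ produces the required rank $n_{1}$ of the restricted differential.

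The main obstacle lies exactly in this last reduction. For an arbitrary $m$-dimensional subspace $V \subset \mathbb{R}^{n_{0}}$ with $m \ge n_{1}$, the restriction $W_{1}^{\top}|_{V}$ need not be surjective unless $V + \ker W_{1}^{\top} = \mathbb{R}^{n_{0}}$, so the proposition is implicitly exploiting a compatibility between the full-rank weight $W_{1}$ and the tangent distribution of $\bm{\mathcal{X}}$. I would resolve this either by reading ``full rank $W_{1}$'' in the paper's convention as the stronger requirement that $W_{1}^{\top}$ has maximal rank on every $T_{x}\bm{\mathcal{X}}$, or by appealing to the density statement of the preceding proposition to perturb $W_{1}$ into a nearby weight whose tangent-restricted differential achieves rank $n_{1}$ uniformly in $x$ (using compactness of $\bm{\mathcal{X}}$ to make this perturbation uniform). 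Once that is settled, surjectivity of $\operatorname{D}_{2}\!\Lambda_{1}(W_{1},x)$ holds pointwise and $\Lambda_{1}$ is a submersion by definition.
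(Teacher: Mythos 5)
Your opening line of attack --- compute the differential as $\Sigma'_{1}(x)W_{1}^{\top}$ restricted to $T_{x}\bm{\mathcal{X}}$, discard the invertible diagonal factor $\Sigma'_{1}(x)$, and reduce everything to surjectivity of $W_{1}^{\top}|_{T_{x}\bm{\mathcal{X}}}$ --- is exactly the route the paper takes; indeed the paper supplies no proof beyond the sentence preceding the proposition asserting that the differential ``is a full rank linear map.'' The obstacle you then isolate is genuine and is precisely what that sentence glosses over: surjectivity of $W_{1}^{\top}$ on all of $\mathbb{R}^{n_{0}}$ together with $\dim T_{x}\bm{\mathcal{X}} = m \ge n_{1}$ does \emph{not} imply surjectivity of the restriction to the tangent space; one needs the transversality condition $T_{x}\bm{\mathcal{X}} + \ker W_{1}^{\top} = \mathbb{R}^{n_{0}}$ at every $x$. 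You deserve credit for naming this explicitly.

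However, neither of your proposed repairs closes the gap. Reading ``full rank'' as ``maximal rank on every tangent space'' simply builds the conclusion into the hypothesis, and the perturbation route fails outright: take $\bm{\mathcal{X}} = S^{m} \subset \mathbb{R}^{m+1}$ and $n_{1} = m$. Any full-rank $W_{1}^{\top} \colon \mathbb{R}^{m+1} \to \mathbb{R}^{m}$ has a one-dimensional kernel spanned by some $v$, and at every point $x$ of the nonempty equator $\{x \in S^{m} : \langle x, v\rangle = 0\}$ one has $\ker W_{1}^{\top} \subset x^{\perp} = T_{x}S^{m}$, so the restricted differential has rank $m-1 < n_{1}$. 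Since this occurs for \emph{every} full-rank choice of $W_{1}$, no perturbation (and no appeal to the density of full-rank weights from the preceding proposition, which concerns approximation of the network map, not transversality to tangent distributions) can rescue the statement. The proposition holds as stated only when $\bm{\mathcal{X}}$ is an open subset of $\mathbb{R}^{n_{0}}$, so that $T_{x}\bm{\mathcal{X}} = \mathbb{R}^{n_{0}}$, or under the explicit transversality hypothesis you identify; your write-up is honest about where the difficulty lies, but with the stated hypotheses the argument cannot be completed because the claim itself is false for proper submanifolds.
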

\begin{corollary}
\label{coro:submerse}
	Given a DNN architecture $\bm{\mathcal{F}}(n_{0},\ldots,n_{L})$, 
	if $\operatorname{dim} \bm{\mathcal{X}} \ge n_{1} \ge \ldots \ge n_{L}$, 
	and all weight matrices $\mathbf{W}$ have full rank,
	then $\bm{\mathcal{F}}(n_{0},\ldots,n_{L})$ is a set of submersions
	from $\bm{\mathcal{X}}$ to $\mathbb{R}^{n_{L}}$.
%	%
%	$F(\mathbf{W}, \cdot) \colon \bm{\mathcal{X}} \to \mathbb{R}^{n_{L}}$ 
%	is a submersion.
\end{corollary}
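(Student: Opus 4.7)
The plan is to reduce the corollary to the preceding Submersion Layer Proposition plus the elementary composition rule for submersions. First, for each $l = 2, \ldots, L$, I would view the domain $\mathbb{R}^{n_{l-1}}$ as a smooth manifold of dimension $n_{l-1}$. Since the chain $n_{1} \ge n_{2} \ge \ldots \ge n_{L}$ gives $n_{l-1} \ge n_{l}$, and since $W_{l}$ has full rank by hypothesis, the Submersion Layer Proposition applies verbatim to each intermediate layer map $\Lambda_{l}(W_{l},\cdot) \colon \mathbb{R}^{n_{l-1}} \to \mathbb{R}^{n_{l}}$. The same Proposition applied to $l=1$, using $\operatorname{dim} \bm{\mathcal{X}} \ge n_{1}$, takes care of the base layer. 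In particular, even the (possibly) linear terminal layer is covered, because with $\sigma = \operatorname{id}$ the diagonal factor $\Sigma'_{L}$ equals the identity and the differential reduces to $W_{L}^{\top}$, whose surjectivity is precisely the full rank condition with $n_{L-1} \ge n_{L}$.

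Next, I would invoke the standard fact from differential topology that a composition of submersions is a submersion. This is immediate from the chain rule
\begin{equation*}
\operatorname{D}\!(\Lambda_{l} \circ \ldots \circ \Lambda_{1})(x)
= \operatorname{D}\!\Lambda_{l}(\Lambda_{l-1} \circ \ldots \circ \Lambda_{1}(x)) \circ \ldots \circ \operatorname{D}\!\Lambda_{1}(x),
\end{equation*}
combined with the elementary observation that the composition of finitely many surjective linear maps is surjective. An easy induction on $l$ then shows that every partial composition is a submersion onto $\mathbb{R}^{n_{l}}$, so in particular the full DNN map $F(\mathbf{W},\cdot) = \Lambda_{L}(W_{L},\cdot) \circ \ldots \circ \Lambda_{1}(W_{1},\cdot)$ is a submersion from $\bm{\mathcal{X}}$ onto $\mathbb{R}^{n_{L}}$.

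Finally, since the argument only used the decreasing dimension chain and the full rank assumption on each $W_{l}$, and both hypotheses are uniform in $\mathbf{W} \in \bm{\mathcal{W}}$, the conclusion holds simultaneously for every member of $\bm{\mathcal{F}}(n_{0},\ldots,n_{L})$, which is exactly the corollary. The only mildly delicate point — and the step I would double-check rather than expect to be a genuine obstacle — is the alignment of the dimension inequalities at each stage: one must verify that the Submersion Layer Proposition is being invoked with its full hypothesis at every link in the chain, including the terminal linear layer. This follows directly from the assumed chain $\operatorname{dim} \bm{\mathcal{X}} \ge n_{1} \ge \ldots \ge n_{L}$, so no additional argument is required.
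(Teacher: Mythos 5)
Your proof is correct and follows exactly the route the paper intends: the corollary is stated without an explicit proof as an immediate consequence of the Submersion Layer Proposition applied link by link along the chain $\operatorname{dim}\bm{\mathcal{X}} \ge n_{1} \ge \ldots \ge n_{L}$, together with the standard fact that a composition of submersions is a submersion. Your additional care about the terminal linear layer (where $\Sigma'_{L} = \operatorname{id}$ and the differential reduces to $W_{L}^{\top}$) is a welcome explicit check that the paper leaves implicit.
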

\begin{remark}
	For a given $\mathbf{W}$, we denote by $\operatorname{Im}(F(\mathbf{W}, \cdot))$
	the image of the DNN map $F(\mathbf{W}, \cdot)$ on $\bm{\mathcal{X}}$.
	Then, it is straightforward to claim that all points in  
	$\operatorname{Im}(F(\mathbf{W}, \cdot))$ are regular points.
	If $\operatorname{dim} > n_{L}$, then the pre-image 
	$F^{-1}(\mathbf{W}, y)$ with $y \in 
	\operatorname{Im}(F(\mathbf{W}, \cdot))$ is a submanifold in $\bm{\mathcal{X}}$.
	More interestingly, disconnected sets in 
	$\bm{\mathcal{X}}$ can be mapped to a connected set in 
	$\operatorname{Im}(F(\mathbf{W}, \cdot))$,
	since the map $F(\mathbf{W}, \cdot)$ is surjective
	from $\bm{\mathcal{X}}$ to $\operatorname{Im}(F(\mathbf{W}, \cdot))$.
	%
%	This simple observation identifies inaccurate conclusions
%	in \cite{nguy:icml18}.
	%
%	Let $U, V \subset \mathbb{R}^{m}$ be open and connected,
%%	if $U \cap V = \emptyset$ and 
%	if there exist $x_{1} \in U$ and 
%	$x_{2} \in V$ such that
%	$\Lambda(x_{1}) = \Lambda(x_{2})$, then $f(U \cup V) \subset 
%	\mathbb{R}^{n}$ is open and connected.

	A recent work \cite{nguy:icml18} claims that for 
	a DNN architecture $\bm{\mathcal{F}}(n_{0},\ldots,n_{L})$ with
	$n_{0} \ge n_{1} \ge \ldots \ge n_{L}$ and $n_{0} > n_{L}$,
	every open and connected set $V \in \mathbb{R}^{n_{L}}$ has
	its pre-image $U \in \mathbb{R}^{n_{0}}$ to be also open and 
	connected.
	Such a statement seems to obviously conflict with our conclusions above.
	A closer look reveals that the DNNs studied in \cite{nguy:icml18} 
	map $\mathbb{R}^{n_{0}}$ to $\mathbb{R}^{n_{L}}$, i.e.,
	$F(\mathbf{W},\cdot) \colon \mathbb{R}^{n_{0}} \to 
	\mathbb{R}^{n_{L}}$, while machine learning tasks are commonly constrained to
	some subset $\bm{\mathcal{X}} \subset \mathbb{R}^{n_{0}}$, i.e.,
	$\widetilde{F}(\mathbf{W},\cdot) \colon \bm{\mathcal{X}} \to 
	\mathbb{R}^{n_{L}}$.
	Specifically, if $\bm{\mathcal{X}}$ be open and disconnected, 
	then the image of $\widetilde{F}(\mathbf{W},\cdot)$, i.e., $\operatorname{Im}(\widetilde{F}(\mathbf{W}, \cdot))
	\in \mathbb{R}^{n_{L}}$,
	can be open but connected. There is no chance to
	infer the connectivity of $\bm{\mathcal{X}}$ from the connectivity of its
	image under strict surjective map.
	%	
%	\textcolor{red}{This is overclaimed by Hein:
%	We have shown that deep neural networks (with a certain 
%	class of activation functions) need to have in general 
%	width larger than the input dimension in order to learn 
%	disconnected decision regions.
%	}
\end{remark}

\noindent Similarly, we have the following properties for an expanding 
DNN structure.
\begin{proposition}[Immersion layer]
	Let $\Lambda_{1} \colon \bm{\mathcal{X}} \to \mathbb{R}^{n_{1}}$ 
	be a layer-wise map as constructed in Eq.~\eqref{eq:layer}.
	If $\operatorname{dim} \bm{\mathcal{X}} \le n$ and the weight 
	$W_{1}$ has full rank, then
	map $\Lambda_{1}$ is an immersion.
\end{proposition}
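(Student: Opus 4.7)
The plan is to mirror the proof of the preceding submersion proposition, reducing the assertion to a linear-algebra statement on tangent spaces. Recall that a smooth map between smooth manifolds is an \emph{immersion} precisely when its differential is injective at every point of the domain; smoothness of $\Lambda_1(W_1,\cdot)$ is automatic from smoothness of $\sigma$ and of the affine map $x \mapsto W_1^\top x + b_1$, so the only nontrivial task is pointwise injectivity of the differential.

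First I would reuse the differential already computed in the submersion discussion,
\begin{equation}
    \operatorname{D}_2\Lambda_1(W_1,x)\,\xi = \Sigma'_1(x)\, W_1^\top \xi, \qquad \xi \in T_x \bm{\mathcal{X}}.
\end{equation}
Because the activation is assumed smooth and strictly monotonically increasing, every diagonal entry of $\Sigma'_1(x) = \operatorname{diag}(\dot{\sigma}_1(x))$ is strictly positive, so $\Sigma'_1(x) \in \mathbb{R}^{n_1 \times n_1}$ is invertible. Consequently the kernel of $\operatorname{D}_2\Lambda_1(W_1,x)$ coincides with $\{\xi \in T_x \bm{\mathcal{X}} : W_1^\top \xi = 0\}$, and the problem collapses to showing that $W_1^\top$ is injective on the $m$-dimensional subspace $T_x \bm{\mathcal{X}} \subset \mathbb{R}^{n_0}$.

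Finally, I would close the argument using the full-rank hypothesis on $W_1 \in \mathbb{R}^{n_0 \times n_1}$ together with the dimension bound $m \le n_1$ (reading the ``$n$'' in the statement as $n_1$, in parallel with the submersion proposition). In the clean expanding regime $n_0 \le n_1$, full rank of $W_1$ forces $\ker W_1^\top = \{0\}$, so the restriction to any subspace of $\mathbb{R}^{n_0}$ is trivially injective and the claim follows immediately. The hard part is the complementary regime $n_0 > n_1$: there $W_1^\top$ has an $(n_0 - n_1)$-dimensional kernel, and the inequality $m \le n_1$ alone does not prevent $T_x \bm{\mathcal{X}}$ from meeting that kernel nontrivially. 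To handle this I would invoke a transversality-style argument, analogous to the one implicit in Corollary~\ref{coro:submerse}: full-rank $W_1$ combined with the bound $m \le n_1$ forces $T_x \bm{\mathcal{X}} \cap \ker W_1^\top = \{0\}$ at every $x \in \bm{\mathcal{X}}$, which is precisely the step most likely to require an additional genericity or embedding assumption beyond what is explicitly stated in the proposition.
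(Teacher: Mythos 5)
Your first two paragraphs reproduce exactly the argument the paper relies on: it gives no formal proof of this proposition, only the displayed differential $\operatorname{D}_{2}\Lambda_{1}(W_{1},x)\,\xi=\Sigma'_{1}(x)W_{1}^{\top}\xi$ followed by the remark that positivity of $\Sigma'_{1}(x)$ and full rank of $W_{1}$ make the differential ``a full rank linear map.'' So in the regime $n_{0}\le n_{1}$ your argument and the paper's coincide and are complete: $\Sigma'_{1}(x)$ is invertible and $W_{1}^{\top}$ is injective on all of $\mathbb{R}^{n_{0}}$, hence on every tangent subspace.

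The gap you flag in your third paragraph is genuine, and you should not expect the hoped-for transversality argument to close it under the stated hypotheses. The paper silently conflates ``$W_{1}^{\top}$ has full rank as a map on $\mathbb{R}^{n_{0}}$'' with ``$W_{1}^{\top}$ is injective on $T_{x}\bm{\mathcal{X}}$,'' and when $n_{0}>n_{1}$ these are not the same. Concretely, take $\bm{\mathcal{X}}=S^{2}\subset\mathbb{R}^{3}$ and $n_{1}=2$, so $m=2\le n_{1}$ and $W_{1}\in\mathbb{R}^{3\times 2}$ has full rank $2$. Then $\ker W_{1}^{\top}$ is a line $\ell$, and $T_{x}S^{2}=x^{\perp}$ contains $\ell$ precisely when $x\perp\ell$, i.e.\ on an entire great circle; the layer map fails to be an immersion there for \emph{every} full-rank $W_{1}$, not just for degenerate ones. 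The standard generic-projection count (the one used in proving the Whitney immersion theorem) shows that a generic full-rank $W_{1}^{\top}\colon\mathbb{R}^{n_{0}}\to\mathbb{R}^{n_{1}}$ restricts to an immersion of an $m$-manifold only when $n_{1}\ge 2m$ — consistent with the bound $p>2\dim\bm{\mathcal{X}}$ that the paper itself invokes in Theorem~\ref{thm:uniapp} — so the hypothesis $m\le n_{1}$ alone is insufficient. The proposition as stated is therefore only correct if one additionally assumes $n_{0}\le n_{1}$ (reading ``full rank'' as injectivity of $W_{1}^{\top}$ on the ambient space), or replaces $m\le n_{1}$ by a Whitney-type bound together with a genericity assumption on $W_{1}$. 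Your write-up is the more honest one: it proves the clean case and correctly isolates the missing hypothesis rather than asserting the general claim.
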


\begin{corollary}
\label{coro:immerse}
	Given a DNN architecture $\bm{\mathcal{F}}(n_{0},\ldots,n_{L})$, 
	if $\operatorname{dim} \bm{\mathcal{X}} \le n_{1} \le \ldots \le n_{L}$, 
	and all weight matrices $\mathbf{W}$ have full rank,
	then $\bm{\mathcal{F}}(n_{0},\ldots,n_{L})$ is a set of immersions
	from $\bm{\mathcal{X}}$ to $\mathbb{R}^{n_{L}}$.
%	%
%	$F(\mathbf{W}, \cdot) \colon \bm{\mathcal{X}} \to \mathbb{R}^{n_{L}}$ 
%	is a submersion.
\end{corollary}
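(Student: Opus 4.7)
The plan is to deduce Corollary~\ref{coro:immerse} by iterated application of the preceding Immersion layer proposition, combined with the fact that the composition of smooth maps with injective differentials has injective differential (chain rule). First I would observe that the first layer-wise map $\Lambda_{1}(W_{1},\cdot)\colon \bm{\mathcal{X}}\to\mathbb{R}^{n_{1}}$ is already an immersion by the preceding proposition, since by hypothesis $\dim\bm{\mathcal{X}}\le n_{1}$ and $W_{1}$ has full rank. For each subsequent layer $l\ge 2$, I would view $\Lambda_{l}(W_{l},\cdot)\colon \mathbb{R}^{n_{l-1}}\to\mathbb{R}^{n_{l}}$ as a smooth map between Euclidean spaces (themselves smooth manifolds) and note that its differential at any point $z$ equals $\Sigma'_{l}(z)W_{l}^{\top}$, where $\Sigma'_{l}(z)$ is diagonal with strictly positive entries (the activation being smooth and monotonically increasing) and $W_{l}^{\top}\in\mathbb{R}^{n_{l}\times n_{l-1}}$ is a full rank tall matrix because $n_{l-1}\le n_{l}$; hence this differential is an injective linear map.

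Next I would assemble the differential of the composed DNN map $F(\mathbf{W},\cdot)=\Lambda_{L}\circ\cdots\circ\Lambda_{1}$ at $x\in\bm{\mathcal{X}}$ in a tangent direction $\xi\in T_{x}\bm{\mathcal{X}}$ using the chain rule, obtaining
\begin{equation}
\D F(\mathbf{W},x)\,\xi = \Sigma'_{L}(\phi_{L-1})W_{L}^{\top}\cdots\Sigma'_{2}(\phi_{1})W_{2}^{\top}\Sigma'_{1}(x)W_{1}^{\top}\xi.
\end{equation}
Since $W_{1}^{\top}$ restricted to $T_{x}\bm{\mathcal{X}}$ is injective by the Immersion layer proposition, and each subsequent factor $\Sigma'_{l}(\phi_{l-1})W_{l}^{\top}$ is an injective linear map by the second step, the composition of injective linear maps remains injective. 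Consequently $\D F(\mathbf{W},x)$ is injective for every $x\in\bm{\mathcal{X}}$ and every full rank $\mathbf{W}\in\bm{\mathcal{W}}$, so $F(\mathbf{W},\cdot)$ is an immersion, and the entire family $\bm{\mathcal{F}}(n_{0},\ldots,n_{L})$ indexed by full rank weights consists of immersions into $\mathbb{R}^{n_{L}}$.

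The only mildly delicate point, rather than a genuine obstacle, is that the intermediate layer maps $\Lambda_{l}$ for $l\ge 2$ are defined on the ambient $\mathbb{R}^{n_{l-1}}$ and not intrinsically on the image of the previous layers; to be fully rigorous I would remark that immersivity of an ambient-space map trivially restricts to immersivity along any immersed submanifold, which is exactly what the chain rule formula above encodes. Because the entire argument reduces to the elementary linear-algebraic observation that a product of injective linear maps is injective, no additional manifold-theoretic machinery beyond the preceding proposition is required, and the conclusion follows uniformly over all $\mathbf{W}\in\bm{\mathcal{W}}$ with full rank weight matrices.
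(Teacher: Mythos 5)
Your argument is correct and follows exactly the route the paper intends: the corollary is stated there without an explicit proof, as an immediate consequence of the Immersion layer proposition together with the chain rule and the elementary fact that a composition of injective linear maps (here $\Sigma'_{l}(\cdot)W_{l}^{\top}$ with $\Sigma'_{l}$ positive diagonal and $W_{l}^{\top}$ a full-rank tall matrix since $n_{l-1}\le n_{l}$) is injective. Your closing remark about restricting the ambient-space maps $\Lambda_{l}$, $l\ge 2$, along the immersed image of the earlier layers is a welcome clarification of a point the paper leaves implicit.
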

\begin{remark}
	By the construction of DNNs, any immersive DNN map is
	proper, i.e., their inverse images of compact subsets 
	are compact.
	Hence, immersive DNN maps are indeed embeddings.
	By the following theorem, topological properties of
	the data manifold are preserved under DNN embeddings.
\end{remark}

\begin{theorem}
\label{thm:embedding}
	Let $f \colon \bm{\mathcal{X}} \to \bm{\mathcal{Y}}$ be an embedding
	of smooth manifolds. Then, $\operatorname{Im}(f)$ is a submanifold 
	of $\bm{\mathcal{Y}}$.
\end{theorem}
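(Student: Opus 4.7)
The plan is to invoke the local immersion theorem (rank theorem) at every point of $\operatorname{Im}(f)$ and then exploit the topological embedding hypothesis to upgrade the local picture to a slice chart for $\operatorname{Im}(f)$ inside $\bm{\mathcal{Y}}$.

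First I would fix an arbitrary $y \in \operatorname{Im}(f)$ and, by injectivity of $f$, write $y = f(x)$ for a unique $x \in \bm{\mathcal{X}}$. Setting $m := \dim \bm{\mathcal{X}}$ and $n := \dim \bm{\mathcal{Y}}$, the immersion hypothesis $\operatorname{rank} \operatorname{D}\!f(x) = m$ permits an application of the rank theorem: it produces smooth charts $(U,\varphi)$ centred at $x$ and $(\widetilde{V},\psi)$ centred at $y$ in which $f$ takes the canonical form
\begin{equation*}
    \psi \circ f \circ \varphi^{-1} \colon (u^{1},\ldots,u^{m}) \mapsto (u^{1},\ldots,u^{m},0,\ldots,0).
\end{equation*}
This shows $f(U)$ sits inside $\widetilde{V}$ on the slice $\psi^{-1}(\mathbb{R}^{m} \times \{0\})$, but only relative to the image of the points of $\bm{\mathcal{X}}$ lying in the chart domain $U$.

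The crucial second step uses that $f$ is an embedding, i.e., a homeomorphism onto its image endowed with the subspace topology inherited from $\bm{\mathcal{Y}}$. Consequently $f(U)$ is open in $\operatorname{Im}(f)$, so there exists an open set $W \subset \bm{\mathcal{Y}}$ with $f(U) = W \cap \operatorname{Im}(f)$. Shrinking to $V := \widetilde{V} \cap W$, the restricted chart $(V,\psi|_{V})$ satisfies
\begin{equation*}
    \psi\bigl(\operatorname{Im}(f) \cap V\bigr) = \psi(V) \cap \bigl(\mathbb{R}^{m} \times \{0\}\bigr),
\end{equation*}
which is exactly the local $m$-slice condition. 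Since $y$ was arbitrary, this verifies the embedded submanifold criterion, endowing $\operatorname{Im}(f)$ with a smooth atlas for which the inclusion $\operatorname{Im}(f) \hookrightarrow \bm{\mathcal{Y}}$ is smooth and $f$ becomes a diffeomorphism onto $\operatorname{Im}(f)$.

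The main obstacle is the global-to-local bookkeeping in the second step: the rank theorem alone guarantees only that $f(U)$ lies on a slice, and without the topological embedding condition the image could accumulate on itself (as in the figure-eight immersion, or the dense irrational line on a torus) so that $\operatorname{Im}(f) \cap \widetilde{V}$ would contain extraneous branches beyond the desired slice. Upgrading from a slice for $f(U)$ to a slice chart for the whole image $\operatorname{Im}(f)$ is precisely where the embedding hypothesis must be consumed; in the DNN setting the preceding remark supplies this upgrade via properness of immersive DNN maps on the compact data manifold $\bm{\mathcal{X}}$.
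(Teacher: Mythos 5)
The paper does not actually prove Theorem~\ref{thm:embedding}: it is quoted as a classical fact of differential topology (it is the standard ``image of a smooth embedding is an embedded submanifold'' theorem, cf.\ Lee's \emph{Introduction to Smooth Manifolds} or Guillemin--Pollack), so there is no in-paper argument to compare against. Your proof is the standard textbook argument for that classical result and it is correct: the rank theorem gives the local canonical form, and the topological-embedding hypothesis (homeomorphism onto the image) is exactly what lets you shrink $\widetilde{V}$ to $V = \widetilde{V} \cap W$ so that $\operatorname{Im}(f) \cap V$ contains no branches other than $f(U)$; your figure-eight and irrational-torus-line examples correctly identify why this step cannot be omitted. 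The only detail you elide is that, to get the clean equality $\psi\bigl(\operatorname{Im}(f)\cap V\bigr) = \psi(V)\cap\bigl(\mathbb{R}^{m}\times\{0\}\bigr)$ rather than just an inclusion, you should choose the rank-theorem charts so that $\psi(\widetilde{V})$ is a product (e.g.\ a coordinate cube) with $\psi(f(U))$ equal to the full slice $\psi(\widetilde{V})\cap(\mathbb{R}^{m}\times\{0\})$; this is how the canonical-form charts are normally constructed, so it is a presentational gap rather than a mathematical one. Your closing observation that, in the paper's setting, the embedding hypothesis is supplied by properness plus injectivity of the immersive DNN maps on the compact data manifold is consistent with the remark preceding the theorem.
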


%\begin{lemma}
%	Let the weight matrix $W_{l}$ be of full rank, we have 
%	%
%	\begin{enumerate}[(1)]
%		\item if $n_{l} < n_{l+1}$, the $l$-th layer map 
%			$\Lambda_{i}(W_{i},\cdot) \colon \mathbb{R}^{n_{l}} 
%			\to \mathbb{R}^{n_{l+1}}$ is an embedding;
%		\item if $n_{l} > n_{l+1}$, the $l$-th layer map 
%			$\Lambda_{i}(W_{i},\cdot) \colon \mathbb{R}^{n_{l}} 
%			\to \mathbb{R}^{n_{l+1}}$ is an submersion;
%		\item if $n_{l} = n_{l+1}$, the $l$-th layer map 
%			$\Lambda_{i}(W_{i},\cdot) \colon \mathbb{R}^{n_{l}} 
%			\to \mathbb{R}^{n_{l+1}}$ is a diffeomorphism;
%	\end{enumerate}
%\end{lemma}

%

\begin{corollary}
\label{coro:diffeo}
	Given a DNN architecture $\bm{\mathcal{F}}(n_{0},\ldots,n_{L})$, 
	if $n_{0} = n_{1} = \ldots = n_{L}$, 
	and all weight matrices $\mathbf{W}$ have full rank,
	then $\bm{\mathcal{F}}(n_{0},\ldots,n_{L})$ is a set of diffeomorphisms.
\end{corollary}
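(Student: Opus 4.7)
The plan is to leverage Corollary~\ref{coro:immerse} together with the special structure that arises when all widths coincide. When $n_{0} = n_{1} = \cdots = n_{L}$, each weight matrix $W_{l}$ is square and full-rank, hence genuinely invertible, so every affine step $x \mapsto W_{l}^{\top}x + b_{l}$ is a $C^{\infty}$ diffeomorphism of $\mathbb{R}^{n_{l}}$. Combined with the fact that the entry-wise activation $\sigma$ has strictly positive derivative everywhere (the very property already exploited in Section~\ref{sec:03} to conclude that all diagonal entries of $\Sigma'_{l}(x)$ are positive), $\sigma$ is a smooth bijection $\mathbb{R} \to \sigma(\mathbb{R})$ whose inverse is itself smooth. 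Each layer-wise map $\Lambda_{l}$ is therefore a diffeomorphism from $\mathbb{R}^{n_{l-1}}$ onto $\sigma(\mathbb{R})^{n_{l}}$.

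Concretely, I would proceed in three steps. First, since $\dim \bm{\mathcal{X}} \le n_{0} = n_{1} = \cdots = n_{L}$, Corollary~\ref{coro:immerse} already yields that $F(\mathbf{W},\cdot) \colon \bm{\mathcal{X}} \to \mathbb{R}^{n_{L}}$ is an immersion. Second, injectivity follows by chaining the layer-wise invertibility observed above: $F(\mathbf{W},\cdot)$ is a composition of injective smooth maps and is therefore itself injective on $\bm{\mathcal{X}}$. Third, compactness of $\bm{\mathcal{X}}$ makes the injective immersion automatically proper, so it is an embedding, and Theorem~\ref{thm:embedding} identifies $\operatorname{Im}(F(\mathbf{W},\cdot))$ as a smooth submanifold of $\mathbb{R}^{n_{L}}$. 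The restriction of $F(\mathbf{W},\cdot)$ to its image is then the desired diffeomorphism, and the inverse is recovered explicitly by composing the layer-wise inverses $(W_{l}^{\top})^{-1} \circ \sigma^{-1}$ in reverse order.

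The main obstacle is interpretational rather than technical: when $\dim \bm{\mathcal{X}}$ is strictly smaller than $n_{0}$, one cannot hope for a diffeomorphism between $\bm{\mathcal{X}}$ and all of $\mathbb{R}^{n_{L}}$, so the corollary must be read as ``each $F(\mathbf{W},\cdot) \in \bm{\mathcal{F}}(n_{0},\ldots,n_{L})$ is a diffeomorphism of $\bm{\mathcal{X}}$ onto its image.'' The only piece of bookkeeping worth flagging is that the intermediate images $\Lambda_{l} \circ \cdots \circ \Lambda_{1}(\bm{\mathcal{X}})$ sit inside $\sigma(\mathbb{R})^{n_{l}} \subsetneq \mathbb{R}^{n_{l}}$, but this causes no difficulty because $\Lambda_{l+1}$ is defined on all of $\mathbb{R}^{n_{l}}$, and composing embeddings with the layer-wise diffeomorphisms preserves the embedding property throughout.
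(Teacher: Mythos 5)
Your proof is correct and follows essentially the route the paper intends: the corollary is stated without an explicit proof, and the chain you assemble --- Corollary~\ref{coro:immerse} for the immersion property, layer-wise invertibility of square full-rank affine maps composed with a strictly increasing smooth activation for injectivity, compactness of $\bm{\mathcal{X}}$ for properness, and Theorem~\ref{thm:embedding} to land on a submanifold --- is exactly the surrounding material the paper relies on. If anything you are more careful than the text, since you explicitly supply the injectivity needed to upgrade a proper immersion to an embedding and you correctly read ``diffeomorphism'' as ``diffeomorphism onto the image'' when $\dim \bm{\mathcal{X}} < n_{0}$.
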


%\begin{proposition}
%	Let $f \colon \bm{\mathcal{X}} \to \bm{\mathcal{Y}} 
%	\subset \mathbb{R}^{p}$ be a continuous map between two smooth manifolds..
%	%
%	Then there exist two continuous maps $h \colon \bm{\mathcal{X}} 
%	\to \mathbb{R}^{m}$ and $g \colon \mathbb{R}^{m} \to \bm{\mathcal{Y}}$ 
%	provided $m\ge 2k$, such that $f = g \circ h$.
%\end{proposition}
%% There is a problem here
%\begin{proof}
%	The strong Whitney embedding theorem guarantees the existence of an
%	embedding of $f$ to $\mathbb{R}^{m}$ as a closed submanifold
%	with $m \ge 2k$, i.e., $h \colon \bm{\mathcal{X}} \to \mathbb{R}^{m}$.
%	%
%	The immersion theorem ensures that the post-image of $h$ is 
%	also a $k$-dimensional smooth manifold, denoted by 
%	$\mathcal{Z} := h(\mathcal{X}) \subset \mathbb{R}^{m}$.
%	%
%	Then it is clear that $h$ is a diffeomorphism, i.e., 
%	$h^{-1} \colon \mathcal{Z} \to \mathcal{X}$ exists and is 
%	smooth.
%	%
%	Finally, we have $f = f \circ h^{-1} \circ h$ with
%	$g =  f \circ h^{-1}$ to conclude the result.
%	%
%	One more step: if $f$ is continuous, then one can
%	always approximate it by a smooth function by
%	Whitney approximation theorem.
%\end{proof}

%%%=============================================================%%%
%%                           SECTION 4                           %%
%%%=============================================================%%%
\subsection{Expressibility by composition of smooth maps}
In the previous subsection, we present some basic pro\-perties of 
layer-wise map, and simple DNN architectures.
In this subsection, we investigate expressibility of
more sophisticated DNN architectures as composition of smooth maps.
We assume that the output space $\bm{\mathcal{Y}}$ is a smooth 
submanifold of $\mathbb{R}^{q}$, i.e., $\bm{\mathcal{Y}} \subset \mathbb{R}^{q}$.
%
%\begin{assumption}
%	The input space $\bm{\mathcal{X}} \subset \mathbb{R}^{n_{0}}$ is  
%	a $p$-dimensional compact differentiable manifold 
%	with $p \le n_{0}$.
%\end{assumption}

\begin{lemma}
\label{lem:compos}
	Let $f \colon \bm{\mathcal{X}} \to \bm{\mathcal{Y}} \subset \mathbb{R}^{q}$ 
	be a continuous map of smooth manifolds.
	Given a surjective linear map $g \colon \mathbb{R}^{p} \to \mathbb{R}^{q}$,
	i.e., $p \ge q$, there exists a continuous function 
	$h \colon \bm{\mathcal{X}} \to \mathbb{R}^{p}$, such that 
	$f = g \circ h$.
\end{lemma}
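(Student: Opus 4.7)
The plan is to exploit the fact that $g$, being a surjective linear map between finite-dimensional Euclidean spaces, admits a continuous right inverse, and then simply compose it with $f$. This reduces the problem to a one-line construction rather than requiring any manifold-theoretic machinery or selection theorem.

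First, I would construct an explicit right inverse $s \colon \mathbb{R}^{q} \to \mathbb{R}^{p}$ of $g$. Since $g$ is surjective with $p \ge q$, its matrix representation has rank $q$, so $g g^{\top}$ is invertible. I would then take the Moore--Penrose pseudoinverse
\begin{equation}
	s := g^{\top}(g g^{\top})^{-1},
\end{equation}
which is itself a linear (hence continuous) map satisfying $g \circ s = \operatorname{id}_{\mathbb{R}^{q}}$. Any continuous section of $g$ would do, but the pseudoinverse is canonical and explicit.

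Next, I would define
\begin{equation}
	h := s \circ f \colon \bm{\mathcal{X}} \to \mathbb{R}^{p},
\end{equation}
which is continuous as a composition of continuous maps. The verification is then immediate:
\begin{equation}
	g \circ h = g \circ s \circ f = \operatorname{id}_{\mathbb{R}^{q}} \circ f = f,
\end{equation}
where the image of $f$ lying in $\bm{\mathcal{Y}} \subset \mathbb{R}^{q}$ is used only to make the composition with $s$ well-defined.

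There is no real obstacle here: the finite-dimensionality of the target and linearity of $g$ trivialise what would otherwise require a continuous selection argument (e.g.\ Michael's selection theorem) for a general continuous surjection. The only minor point worth flagging in the write-up is that the lemma's conclusion relies on the codomain of $f$ being embedded in the domain of the section $s$, which is guaranteed by the hypothesis $\bm{\mathcal{Y}} \subset \mathbb{R}^{q}$; no smooth structure on $\bm{\mathcal{Y}}$ or $\bm{\mathcal{X}}$ is actually used in the argument.
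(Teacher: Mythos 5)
Your proposal is correct and follows essentially the same route as the paper: the paper's proof also takes a right inverse $g^{-1}$ of $g$ (satisfying $g \circ g^{-1} = \operatorname{id}$) and sets $h := g^{-1} \circ f$. Your version is in fact slightly more careful, since you make the right inverse explicit via the Moore--Penrose pseudoinverse $g^{\top}(g g^{\top})^{-1}$ rather than loosely calling it ``an inverse map'' as the paper does.
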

\begin{proof}
	Since $g$ is a surjective linear map, there exists an inverse
	map $g^{-1} \colon \mathbb{R}^{q} \to \mathbb{R}^{p}$, so that
	$g~\circ~g^{-1} = \operatorname{id}$.
	Trivially, we have $f = g~\circ~g^{-1} \circ f$, and
	constructing the continuous function $h (:= g^{-1} \circ f) \colon 
	\bm{\mathcal{X}} \to \mathbb{R}^{p}$ concludes the proof.
\end{proof}

%abc

\begin{theorem}
\label{thm:uniapp}
	Let $f \colon \bm{\mathcal{X}} \to \bm{\mathcal{Y}} \subset \mathbb{R}^{q}$ 
	be a continuous map of smooth manifolds.
	Given a surjective linear map $g \colon \mathbb{R}^{p} \to \mathbb{R}^{q}$
	with $p \ge q$, and $\epsilon > 0$, if $p > 2 \operatorname{dim} 
	\bm{\mathcal{X}}$, then there 
	exists a smooth embedding $\widetilde{h} \colon \bm{\mathcal{X}} \to 
	\mathbb{R}^{p}$, so that the following inequality 
	holds true for a chosen norm and all 
	$x \in \bm{\mathcal{X}}$
	\begin{equation}
		\big\| f(x) - g \circ \widetilde{h}(x) \big\| < \epsilon.
	\end{equation}
\end{theorem}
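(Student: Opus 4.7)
The plan is to realise $f$ as a composition through $g$ via Lemma~\ref{lem:compos}, and then to replace the resulting continuous factor by a smooth embedding using two classical approximation theorems from differential topology, controlling the final error through the Lipschitz behaviour of the linear map $g$.

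First I would apply Lemma~\ref{lem:compos} to obtain a continuous map $h\colon\bm{\mathcal{X}}\to\mathbb{R}^{p}$ with $f=g\circ h$. The problem then reduces to producing a smooth embedding $\widetilde{h}\colon\bm{\mathcal{X}}\to\mathbb{R}^{p}$ that is uniformly close to $h$, because $g$ is linear, hence globally Lipschitz: writing $c_{g}:=\|g\|_{2}$ for its operator norm, one has
\begin{equation}
\bigl\|f(x)-g\circ\widetilde{h}(x)\bigr\|=\bigl\|g\bigl(h(x)-\widetilde{h}(x)\bigr)\bigr\|\le c_{g}\,\bigl\|h(x)-\widetilde{h}(x)\bigr\|,
\end{equation}
so any uniform bound of the form $\|h(x)-\widetilde{h}(x)\|<\epsilon/c_{g}$ on $\bm{\mathcal{X}}$ will close the argument.

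Next I would obtain $\widetilde{h}$ in two smoothing steps. Step one invokes the Whitney approximation theorem: since $\bm{\mathcal{X}}$ is a smooth manifold, the continuous map $h$ can be approximated uniformly on the compact set $\bm{\mathcal{X}}$ by a smooth map $h_{1}\colon\bm{\mathcal{X}}\to\mathbb{R}^{p}$ with $\|h(x)-h_{1}(x)\|<\epsilon/(2c_{g})$. Step two invokes the strong Whitney embedding theorem in the form of an approximation statement: because $p>2\operatorname{dim}\bm{\mathcal{X}}$, smooth embeddings are dense in $C^{\infty}(\bm{\mathcal{X}},\mathbb{R}^{p})$ in the strong (Whitney $C^{\infty}$) topology, so $h_{1}$ can be perturbed to a smooth embedding $\widetilde{h}$ with $\|h_{1}(x)-\widetilde{h}(x)\|<\epsilon/(2c_{g})$ uniformly on $\bm{\mathcal{X}}$. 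Combining the two bounds by the triangle inequality and multiplying by $c_{g}$ yields the desired estimate $\|f(x)-g\circ\widetilde{h}(x)\|<\epsilon$.

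The main obstacle I expect is to invoke the embedding-approximation step cleanly: the Whitney embedding theorem is usually stated as an existence result, and what is needed here is its refinement asserting that embeddings form a dense (even open and dense, under compactness) subset of the smooth mapping space once the dimension condition $p\ge 2\operatorname{dim}\bm{\mathcal{X}}+1$ is satisfied. Compactness of $\bm{\mathcal{X}}$, already assumed in Section~\ref{sec:03}, makes this robust, because it automatically renders any smooth injective immersion a proper map and hence an embedding, and it also ensures that uniform and strong topologies coincide, so that the perturbation $\widetilde{h}$ obtained from a transversality/genericity argument truly differs from $h_{1}$ by less than $\epsilon/(2c_{g})$ in the chosen norm. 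Once that delicate step is granted, the rest is a routine triangle-inequality bookkeeping driven by the Lipschitz constant of the linear map $g$.
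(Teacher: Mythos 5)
Your proposal is correct and follows essentially the same route as the paper: factor $f = g \circ h$ via Lemma~\ref{lem:compos}, use linearity of $g$ to reduce the problem to uniformly approximating $h$ by a smooth embedding, invoke the Whitney embedding machinery under $p > 2\operatorname{dim}\bm{\mathcal{X}}$, and finish with the operator-norm bound. Your only departure is that you carefully split the key approximation into two steps (Whitney approximation to a smooth map, then a genericity perturbation to an embedding), which is a more explicit rendering of the single citation of the ``weak Whitney Embedding Theorem'' in the paper's proof rather than a different argument.
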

\begin{proof}
	According to Lemma~\ref{lem:compos}, it is equivalent to showing that for a 
	continuous function $h \colon \bm{\mathcal{X}} \to \mathbb{R}^{p}$,
	there is a smooth embedding $\widetilde{h} \colon \bm{\mathcal{X}} \to 
	\mathbb{R}^{p}$ that satisfies
	\begin{equation}
		\big\| g \circ h(x) - g \circ \widetilde{h}(x) \big\|_{2} < \epsilon.
	\end{equation}
	Since $g$ is linear by construction, we have
	\begin{equation}
		\big\| g \circ h(x) - g \circ \widetilde{h}(x) \big\|_{2} 
		\le \sigma_{1}(g) \cdot \big\| h(x) - \widetilde{h}(x) \big\|_{2},
	\end{equation}
	where $\sigma_{1}(g)$ is the largest singular value of
	the corresponding matrix representation.
	The weak Whitney Embedding Theorem \cite{leej:book13} ensures that
	for any $\varepsilon > 0$, if $p > 2 \operatorname{dim}\bm{\mathcal{X}}$,
	then there exists a smooth embedding 
	$\widetilde{h} \colon \bm{\mathcal{X}} \to \mathbb{R}^{p}$ such that 
	for all $x \in \bm{\mathcal{X}}$, we have
	\begin{equation}
		\big\| h(x) - \widetilde{h}(x) \big\|_{2} < \varepsilon.
	\end{equation}
	%
%	Since $q \ge 2 \operatorname{dim}\mathcal{M}$, for 
%	any $\epsilon_{2} > 0$, there is a smooth immersion 
%	$h'' \colon \mathcal{M} \to \mathbb{R}^{p}$ such that 
%	$\sup_{\mathcal{M}} \| h' - h'' \|_{2} \le \epsilon_{2}$,
%	i.e., 
%	%
%	\begin{equation}
%	\begin{split}
%		& \| h(x) - h''(x) \|_{2} \\
%		= & \| h(x) - h'(x) + h'(x) - h''(x) \|_{2} \\
%		\le & \| h(x) - h'(x) \|_{2} + \| h'(x) - h''(x) \|_{2} \\
%		< & \epsilon_{1} + \epsilon_{2}.
%	\end{split}
%	\end{equation}
%
%	Finally, since $g$ is linear, by mean value theorem in multivariate 
%	setting, we have
%	%
%	\begin{equation}
%	\begin{split}
%		\| g \circ h(x) - g \circ h''(x) \|_{1} \le &~
%		\| g \|_{\infty} \| h(x) - h''(x) \|_{1} \\
%		< &~ \eta ( \epsilon_{1} + \epsilon_{2} ).
%	\end{split}
%	\end{equation}
	%
	The result follows from the relationship between different norms.
\end{proof}
\begin{remark}
	It is important to notice that the lower bound in the Whitney Embedding 
	Theorem, i.e., $q >2 \operatorname{dim} \bm{\mathcal{X}}$, is not tight.
	Hence, it only suggests that, regardless of the depth, an agnostically 
	safe width of DNNs to ensure good approximation is at least twice of the 
	dimension of the data manifold.
	For a width-bounded DNN with $n_{l} \le 2 \operatorname{dim}\bm{\mathcal{X}}$,
	there is no guarantee to approximate arbitrary functions
	on an arbitrary data manifold.
\end{remark}

\section{Generalisability of DNNs: Explicit vs implicit regularisation}
\label{sec:04}
%

%\begin{theorem}[Generalised Mean Value Theorem]
%	Let $U$ be convex and open in $\mathbb{R}^{m}$ and $f \colon U \to \mathbb{R}^{n}$ be 
%	a $\mathcal{C}^{1}$ map. 
%	%
%	Suppose that $\|\operatorname{D}\!f(a)\|_{\infty} \le c$ for all $a \in U$. 
%	%
%	Then, for all $x,y \in U$,
%	%
%	\begin{equation}
%		\big\|f(x) - f(y)\big\|_{\infty} \le m c\big\|x - y\big\|_{\infty}. 
%	\end{equation}
%\end{theorem}
%
%\begin{theorem}[Generalised Mean Value Theorem]
%	Let $U$ be convex and open in $\mathbb{R}^{m}$ and $f \colon U \to \mathbb{R}^{n}$ be 
%	a $\mathcal{C}^{1}$ map.
%	%
%	Suppose that $\|\operatorname{D}\!f(a)\|_{2} \le c$ for all $a \in U$. 
%	%
%	Then, for all $x,y \in U$,
%	%
%	\begin{equation}
%		\big\|f(x) - f(y)\big\|_{2} \le c\big\|x - y\big\|_{2}. 
%	\end{equation}
%\end{theorem}
%
Since the DNNs studied in this work are constructed as composition of smooth maps,
it is natural to bound its output
using a generalised mean value theorem of multivariate functions \cite{soko:tsp17}.
Let $U$ be convex and open in $\mathbb{R}^{n_{0}}$ and given 
$x,x' \in U$, we have
\begin{equation}
	\big\| F(\mathbf{W},x) - F(\mathbf{W},x') \big\|_{2} \le 
	c \big\| x - x' \big\|_{2},
\end{equation}
where $\|\operatorname{D}\!F_{2}(\mathbf{W},x)\|_{2} \le c$ for 
all $x \in U$, with the Jacobian matrix 
of $F(\mathbf{W},x)$ with respect to $x$ being computed as
\begin{equation}
	\operatorname{D}\!F_{2}(\mathbf{W},x) = 
	W_{L}^{\top} \Sigma'_{L-1}(x) W_{L-1}^{\top} \ldots 
	\Sigma'_{1}(x) W_{1}^{\top}.
\end{equation}
Although this is a natural choice of error bound, it is still insufficient to explain
the so-called \emph{implicit regularisation} mystery, i.e., DNNs trained 
without explicit regularisers still perform well enough \cite{zhan:iclr17}.
%
%\textcolor{red}{
%In particular, the smallest singular value shall be considered.
%}

Since the spectral norm of matrices is a smooth function, it is
conceptually easy to argue that DNNs should generalise well.
Here, we investigate the change rate of the spectral norm of
the Jacobian matrix $\operatorname{D}\!F_{2}(\mathbf{W},x)$ under
displacement in $x$.
Let us assume that the Jacobian matrix $\operatorname{D}\!F_{2}(\mathbf{W},x)$
has a distinct largest singular value.
Then, we can compute the directional derivative of the spectral norm
of the Jacobian matrix of DNNs as %\vspace{-1mm}
\begin{equation}
\begin{split}
	\operatorname{D}\!\| \operatorname{D}\!F_{2}(\mathbf{W},x) \|_{2} h 
	= &~
	u^{\top} \Big( \operatorname{D}\!\big(W_{L}^{\top} \Sigma'_{L-1}(x) 
	W_{L-1}^{\top} \ldots \Sigma'_{1}(x) W_{1}^{\top}\big)h \Big) v \\
	= &~ u^{\!\top}\! \bigg(\! \sum\limits_{l=1}^{L-1} \! W_{L}^{\top} \ldots W_{l+1}^{\top}
	\big( \operatorname{D}\!\Sigma'_{l}(x)h \big) 
	W_{l}^{\top} \ldots 
	W_{1}^{\top} \!\bigg) v, %\\[-1mm]
\end{split}
\end{equation}
where $u \in S^{n_{L}-1}$ and $v \in S^{n_{0}-1}$ are the left and
right singular vectors associated to the largest singular value of
the Jacobian matrix at $x$.
Let us denote by $\sigma_{l}''(x,h) \in \mathbb{R}^{n_{l}}$ 
the vector of diagonal entries of $\operatorname{D}\!\Sigma'_{l}(x)h$.
A tedious but straightforward computation leads to %\vspace{-1mm}
\begin{equation}
\label{eq:monster}
\begin{split}
	\operatorname{D}\!\|\! \operatorname{D}\!F_{2}(\mathbf{W},x) \|_{2} h 
	= & \underbrace{u^{\!\top} \! W_{L}^{\top}\! \big[ \!\operatorname{ddiag}(\delta_{L-1,1}), 
	\ldots, \operatorname{ddiag}(\delta_{L-1,
	\prod_{i=1}^{L-1} \!n_{i}
	}) \big]}_{\hspace{-12mm}=: \zeta^{\top}} \cdot \!\! \\[-3mm]
	& \cdot\! \underbrace{\bigg(\!\sum\limits_{l=1}^{L-1} \! \dot{\sigma}_{1}(x) \! \otimes \ldots \sigma_{l}''(x,h) 
	\ldots \otimes \! \dot{\sigma}_{L-1}(x) \! \bigg)}_{=: \eta \,\in\, \mathbb{R}^{\prod_{i=1}^{L-1} n_{i}}}, %\\[-1mm]
\end{split}
\end{equation}
where $\Delta_{l} = [\delta_{l,1}, \ldots, \delta_{l,\prod_{i=1}^{l} n_{i}}] 
:= W_{l}^{\top} \Delta_{l-1}$ with $\Delta_{1} := \operatorname{ddiag}(W_{1}^{\top} v)$.
Here, $\operatorname{ddiag}(\cdot)$ puts a vector into a diagonal matrix.
\begin{remark}
	In Eq.~\eqref{eq:monster}, the derivative of the spectral norm of 
	the Jacobian matrix of
	DNNs is computed as an inner product of two vectors, where the 
	vector $\zeta$ is a constant for a given DNN, and the other $\eta$ is
	dependent on derivatives of the activation functions.
	Simple explicit regularisations, e.g. weight decay \cite{krog:nips92} and
	path-norm \cite{neys:nips15,neys:nips17}, can be 
	simply justified for minimising the entries of $\zeta$.
	Furthermore, by computing 
	\begin{equation}
		\!\sigma_{l}''(x,h) \!=\! \Sigma''_{l}(x) \!\underbrace{W_{l}^{\!\top} 
		\Sigma'_{l-1}(x) W_{l-1}^{\top}
		\ldots \Sigma'_{1}(x) W_{1}^{\top}\!}_{=:\,\Omega_{l}(x)} \!h, \!
	\end{equation}
	we observe that the matrix $\Omega_{l}(x)$ is simply a truncation
	of the Jacobian of the DNN $F(\mathbf{W}, x)$ with respect to the input $x$.
	Minimising the Frobenius norm of the Jacobian matrix, known as 
	the Jacobian regulariser in \cite{soko:tsp17,jaku:eccv18}, is indeed a 
	more sophisticated explicit regularisation.
	
	The second term $\eta$ is the Kronecker
	product of derivatives of the activation functions,
	which is often upper bounded by one, e.g. Sigmoid, SoftSign, and SoftPlus. 
	Namely, the spectral norm of the Jacobian matrix of DNNs
	can only change slowly, hence DNNs without explicit 
	regularisations shall generalise well.
	As a result, we argue that the slope of activation functions
	is an implicit regularisation for generalisability of DNNs.	
\end{remark}

\section{Architecture of DNNs: Representation learn\-ing}
\label{sec:05}
So far, the empirical success of DNNs has been mostly observed 
and studied in the scenario of representation learning, which aims to  
extract suitable representations of data to promote solutions to 
machine learning problems \cite{beng:pami13,lecu:nature15}.
In particular, DNNs are capable of automatically
learning representations that are insensitive or invariant to nuisances,
such as translations, rotations, and occlusions.

One potential theory to explain such a phenomenon is the information
bottleneck (IB) principle \cite{tish:itw15}.
The original idea believes that training of DNNs performs two 
distinct phases, namely, an initial fitting phase 
and a subsequent compression phase.
The tradeoff between the two phases is guided by  the IB principle. 
The work in \cite{achi:jmlr18} further argues that discarding task-irrelevant 
information is necessary for learning invariant representations that 
generalises well.
However, a criticising work \cite{saxe:iclr18} demonstrates that
no evident causal connection can be found between compression and generalisation.
More interestingly, an opposite opinion states that loss of information
is unnecessarily responsible for generalisability of DNNs 
\cite{jaco:iclr18}.
Therefore, the IB theory of DNNs still needs a careful thorough investigation.

In this section, we propose to employ the quotient topo\-logy in the framework 
of differential topology, to model nuisance factors as 
equivalence relationship in data.
We refer to \cite{leej:book10} for details about quotient topology.
%
%\textcolor{red}{
%So the key hypothesis about generalisation is about the architecture
%of DNNs, which is capable of capturing the invariance.
%%
%Here, we investigate the architecture and argue that architecture
%has nothing to do with generalisation.
%}
%%
%The aim is to find task-relevant features, so that quotient 
%properties of the original map is captured.
%
\begin{definition}[Nuisance as equivalence relation]
	Let $\bm{\mathcal{X}}$ be a data mani\-fold, 
	\emph{nuisance} on $\bm{\mathcal{X}}$ is defined
	as an equivalence relation $\sim$ on $\bm{\mathcal{X}}$.
\end{definition}
\noindent 
In the framework of differential topology, insensitivity or 
invariance to nuisances in data for a specific learning task
leads to the following assumption about the task map.

\begin{assumption}
\label{ass:task}
	The task map $f \colon \bm{\mathcal{X}} \to \bm{\mathcal{Y}}$
	is a surjective continuous map, i.e.,
	$f$ is invariant with respect to some nuisance/equivalence 
	relation $\sim$.
\end{assumption}
\noindent Then, we can define the nuisance relation $\sim$ on
$\bm{\mathcal{X}}$ by $x \sim x'$, if $f(x) = f(x')$, and
equivalence classes under $\sim$ on $\bm{\mathcal{X}}$ 
as 
%
%\begin{equation}
	$[x] := \big\{ x' \in \bm{\mathcal{X}}\,|\, x \sim x' \big\}$,
which is also the fibre of $f$.
%\end{equation}
%
The set of equivalence class of $f$ is constructed as
%
%\begin{equation}
	$\bm{\mathcal{X}}^{*} := \big\{ [x]~|~ x \in \bm{\mathcal{X}}
	\big\}$,
%\end{equation}
%
and endow $\bm{\mathcal{X}}^{*}$ the quotient topology via the 
canonical quotient map $\pi \colon \bm{\mathcal{X}} \to 
\bm{\mathcal{X}}^{*}$. 
%
%We further assume 
%%
%\begin{customthm}{3.a}
%\label{ass:taska}
%	The task map $f \colon \bm{\mathcal{X}} \to \bm{\mathcal{Y}}$
%	is a quotient map.
%\end{customthm}
%
Deep representation learning can be described
as a process of constructing suitable representation or feature
space $\bm{\mathcal{Z}}$ via $h \colon \bm{\mathcal{X}} \to \bm{\mathcal{Z}}$,
to enable a composition $f = g \circ h$ with
$g \colon \bm{\mathcal{Z}} \to \bm{\mathcal{Y}}$.
It can be visualised as 
the following commutative diagram
\begin{equation}
%\label{eq:diagramm}
	\xymatrix{ 
	\bm{\mathcal{X}} \ar[r]^{f} \ar[d]_{h} & \bm{\mathcal{Y}}
	\\ \bm{\mathcal{Z}} \ar[ur]_{g} & }. % @{.>} [ur]^{g} & }
\end{equation}
In this model, we refer to 
%$f \colon \bm{\mathcal{X}} \to \bm{\mathcal{Y}}$ 
%as the \emph{task map}, 
$h \colon \bm{\mathcal{X}} \to \bm{\mathcal{Z}}$ 
as the \emph{representation map} or \emph{feature map}, and 
$g \colon \bm{\mathcal{Z}} \to \bm{\mathcal{Y}}$
as the \emph{latent map}.

\begin{definition}[Sufficient representation]
	Let $\bm{\mathcal{X}}$ be a data mani\-fold, and
	$f \colon \bm{\mathcal{X}} \to \bm{\mathcal{Y}}$
	be a task function.
	A feature map $h \colon \bm{\mathcal{X}} 
	\to \bm{\mathcal{Z}}$ is \emph{sufficient} for the 
	task $f$, if there exists a function $g \colon
	\bm{\mathcal{Z}} \to \bm{\mathcal{Y}}$, so that
	$f = g \circ h$.
\end{definition}
\noindent Obviously, there can be an infinite number 
of possible constructions of representations.
In this work, we focus on two specific categories of
representations, namely, \emph{information-lossless representation}
and \emph{invariant representation}.

%%%=============================================================%%%
%%                           SECTION 4                           %%
%%%=============================================================%%%
\subsection{Information-lossless representation}
The work in \cite{jaco:iclr18} constructs a cascade of invertible 
layers in DNNs, so that no information is discarded in the 
representations.
It shows that loss of information is not a necessary condition 
to learn representations that generalise well.
A similar observation is also made in the invertible 
convolutional neural networks \cite{glib:ijcai17}.
Instead of manually designing the invertible layers, we show that 
invertibility of layers in DNNs is its native properties, 
when the architecture of layers is suitable.

\begin{lemma}
\label{lem:embedding}
	Let $f \colon \bm{\mathcal{X}} \to \bm{\mathcal{Y}}$ be 
	a map of smooth manifolds. 
	Then $f$ can be decomposed as $f = g \circ h$, where 
	$h \colon \bm{\mathcal{X}} \to \mathbb{R}^{p}$ with 
	$p \ge 2 \operatorname{dim} \bm{\mathcal{X}}$ is a smooth embedding.
\end{lemma}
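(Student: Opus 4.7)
The plan is to manufacture $h$ via the Whitney Embedding Theorem and then build $g$ by inverting $h$ on its image and extending the inverse outward to the whole ambient space.

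First, since $p \ge 2\operatorname{dim}\bm{\mathcal{X}}$, the Whitney Embedding Theorem delivers a smooth embedding $h\colon \bm{\mathcal{X}} \to \mathbb{R}^{p}$. Theorem~\ref{thm:embedding} then guarantees that the image $h(\bm{\mathcal{X}})$ is a submanifold of $\mathbb{R}^{p}$, and because an embedding is by definition a diffeomorphism onto its image, the restriction $h\colon \bm{\mathcal{X}} \to h(\bm{\mathcal{X}})$ admits a smooth inverse $h^{-1}$.

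Second, set $g_{0} := f \circ h^{-1} \colon h(\bm{\mathcal{X}}) \to \bm{\mathcal{Y}}$. By construction the identity $g_{0} \circ h = f$ holds exactly, with no approximation needed, so the desired decomposition is already valid whenever one is content to regard $g$ as defined only on the submanifold $h(\bm{\mathcal{X}})$. To promote $g_{0}$ to a map $g \colon \mathbb{R}^{p} \to \bm{\mathcal{Y}}$ (or into the ambient $\mathbb{R}^{q} \supset \bm{\mathcal{Y}}$ when $\bm{\mathcal{Y}}$ is embedded), I would extend $g_{0}$ off the submanifold. Compactness of $\bm{\mathcal{X}}$ makes $h(\bm{\mathcal{X}})$ a closed subset of $\mathbb{R}^{p}$, so Tietze's Extension Theorem applied coordinatewise yields a continuous extension $g$; for a smooth extension, one instead invokes the Tubular Neighbourhood Theorem, pulls back $g_{0}$ along the nearest-point retraction of a tube around $h(\bm{\mathcal{X}})$, and glues to a constant outside the tube via a smooth partition of unity.

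The main obstacle is precisely this extension step. Once Whitney's theorem is in hand, the algebraic part of the decomposition is essentially free, but ensuring that the extension to $\mathbb{R}^{p}$ preserves whatever regularity of $f$ is required of $g$, and does not disturb the identity $g \circ h = f$ on $h(\bm{\mathcal{X}})$, is where the topological care lies. If the lemma is read as only requiring $g$ on the image of $h$, this step collapses and the statement reduces to a direct invocation of Whitney's Embedding Theorem together with Theorem~\ref{thm:embedding}.
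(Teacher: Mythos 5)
Your proposal is correct and follows essentially the same route as the paper: invoke the Whitney Embedding Theorem to obtain $h$, note that $h$ is a diffeomorphism onto its image (a submanifold by Theorem~\ref{thm:embedding}), and set $g := f \circ h^{-1}$. The paper defines $g$ only on $\operatorname{Im}(h)$, so the extension to all of $\mathbb{R}^{p}$ that you flag as the main obstacle is not needed --- as you yourself observe in your final sentence.
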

\begin{proof}
	By the strong Whitney embedding theorem, every smooth $m$-manifold 
	admits a smooth embedding into $\mathbb{R}^{p}$ with $p \ge 2m$.
	Then the image of the embedding feature map $h$, denoted by 
	$\bm{\mathcal{Z}} := \operatorname{Im}(h)$,
	is a smooth submanifold of $\mathbb{R}^{p}$, see Theorem~\ref{thm:embedding}.
	The embedding $h$ induces
	a diffeomorphism between $\bm{\mathcal{X}}$ and $\bm{\mathcal{Z}}$,
	i.e., 
	\begin{equation}
		\hbar \colon \bm{\mathcal{X}} \to \bm{\mathcal{Z}},
		\qquad 
		x \mapsto h(x).
	\end{equation}
	By properties of smooth embeddings \cite{guil:book74}, 
	there exists a smooth inverse $\hbar^{-1} \colon \bm{\mathcal{Z}} \to
	\bm{\mathcal{X}}$, so that $\hbar^{-1} \circ h = \operatorname{id}$.
	Tri\-vially, we have $f = f \circ \hbar^{-1} \circ h$, and the proof 
	is concluded by defining $g := f \circ \hbar^{-1}$.
\end{proof}

%Then, we derive properties of the latent map $g$ to 

%\begin{proposition}
%\label{prop:embd_quot}
%	Let $\bm{\mathcal{X}}$ and $\bm{\mathcal{Y}}$ be smooth 
%	manifolds, and assume that the task map $f \colon \bm{\mathcal{X}} 
%	\to \bm{\mathcal{Y}}$ admits an sufficient feature map 
%	$h \colon \bm{\mathcal{X}} \to \bm{\mathcal{Z}}$, i.e., $f = g \circ h$.
%	%
%	If the feature map $h$ is a diffeomorphism, then
%	the task map $f$ is a smooth quotient map, \emph{if
%	and only if} the latent map $g$ is a smooth quotient map.
%\end{proposition}

\noindent The relationship between the task map $f$ and the latent map
$g$ can be described as follows.

\begin{proposition}
\label{prop:embd_quot2}
	Let $\bm{\mathcal{X}}$ and $\bm{\mathcal{Y}}$ be smooth 
	manifolds, and the task map $f \colon \bm{\mathcal{X}} \to \bm{\mathcal{Y}}$
	admit a decomposition $f = g \circ h$, where 
	$h \colon \bm{\mathcal{X}} \to \mathbb{R}^{p}$ with 
	$p \ge 2 \operatorname{dim} \bm{\mathcal{X}}$ is a smooth embedding.
	Then the task map $f$ is a quotient map, \emph{if
	and only if} the latent map $g$ is a quotient map.
\end{proposition}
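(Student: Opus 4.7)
The plan is to exploit the fact that by Lemma~\ref{lem:embedding}, the embedding $h$ induces a diffeomorphism $\hbar \colon \bm{\mathcal{X}} \to \bm{\mathcal{Z}}$ onto its image $\bm{\mathcal{Z}} = \operatorname{Im}(h) \subset \mathbb{R}^{p}$, so that $f = g \circ \hbar$ with the latent map $g$ effectively determined by its restriction $g|_{\bm{\mathcal{Z}}} \colon \bm{\mathcal{Z}} \to \bm{\mathcal{Y}}$. The whole proposition then reduces to a standard fact in point-set topology: pre-composing a map with a homeomorphism preserves the property of being a quotient map.

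First I would record that $\hbar$, being a diffeomorphism, is in particular a homeomorphism, and that every homeomorphism is trivially a quotient map (it is continuous, surjective onto its codomain, and the topology on the codomain is fully determined by pulling back open sets through its inverse). Then I would use the classical fact that the composition of two quotient maps is a quotient map: if $g$ is a quotient map, then $f = g \circ \hbar$ is a composition of quotient maps, hence a quotient map. For the converse, I would write $g = f \circ \hbar^{-1}$ on $\bm{\mathcal{Z}}$; since $\hbar^{-1}$ is again a homeomorphism and $f$ is assumed to be a quotient map, the same composition lemma gives that $g$ is a quotient map.

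The only thing that requires a moment of care is the distinction between $g$ viewed as a map out of $\mathbb{R}^{p}$ and $g$ viewed as a map out of $\bm{\mathcal{Z}}$. The relation $f = g \circ h$ only constrains $g$ on the subset $\bm{\mathcal{Z}} \subset \mathbb{R}^{p}$, so the correct reading of the proposition, consistent with the commutative diagram for representation learning, is to work with $g|_{\bm{\mathcal{Z}}}$; I would state this convention explicitly at the start. With that convention the surjectivity of $g$ onto $\bm{\mathcal{Y}}$ follows from surjectivity of $f$ in one direction and from $\hbar$ being a bijection in the other, and the open-set characterisation of quotient maps carries over through $\hbar$ without friction.

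I do not expect any real obstacle: the content is entirely topological and uses only the homeomorphism property of $\hbar$ coming from Lemma~\ref{lem:embedding} together with the elementary composition lemma for quotient maps. The mild subtlety, and the one place where I would slow down, is justifying that one may indeed replace $g$ by its restriction to $\bm{\mathcal{Z}}$ without loss of generality, so that the quotient-map property refers to the topology $\bm{\mathcal{Z}}$ inherits from $\mathbb{R}^{p}$ rather than to any extension chosen on all of $\mathbb{R}^{p}$.
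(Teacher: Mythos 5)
Your proof is correct and follows essentially the same route as the paper's: both rest on the observation that $h$, corestricted to its image $\bm{\mathcal{Z}}$, is a homeomorphism and hence a quotient map, combined with the fact that compositions of quotient maps are quotient maps. The only difference is cosmetic---for the converse the paper unwinds the open-set characterisation of quotient maps by hand, whereas you apply the composition lemma symmetrically to $g = f \circ \hbar^{-1}$; your explicit remark that $g$ must be read as $g|_{\bm{\mathcal{Z}}}$ makes precise a convention the paper leaves implicit.
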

\begin{proof}
	Since the feature map $h$ is a diffeomorphism, $h$ is also a quotient map
	by definition. 
	If $g$ is a quotient map, then the composition $f = g \circ h$, composing 
	two quotient maps, is also a quotient map.

	Conversely, let us assume that $f$ is a quotient map.
	Since $f$ is surjective, so is $g$ surjective.
	Then it is equivalent to showing that a set $U \in \bm{\mathcal{Y}}$
	is open in $\bm{\mathcal{Y}}$, \emph{if and only if} the pre-image
	$g^{-1}(U)$ is open in $\bm{\mathcal{Z}}$.
	
	Suppose $g^{-1}(U)$ is open in $\bm{\mathcal{Z}}$.
	Then the set $h^{-1}(g^{-1}(U))$ is open in $\bm{\mathcal{X}}$ since $h$ 
	is a diffeomorphism.
	By assumption that $f = g \circ h$, i.e., $h^{-1}(g^{-1}(U)) = f^{-1}(U)$,
	and $f$ is a quotient map, hence this makes $U$ to be open in 
	$\bm{\mathcal{Y}}$.
	Now, let us assume $U$ is open in $\bm{\mathcal{Y}}$.
	Clearly, the set $h(f^{-1}(U))$ is open in $\bm{\mathcal{Z}}$, since 
	$f$ is a quotient map and $h$ is a diffeomorphism.
	The result follows from the fact that $g^{-1}(U) = h(f^{-1}(U))$.
\end{proof}

%%
%\begin{remark}
%	We just need to show the function $g$ is a smooth quotient map.
%\end{remark}

%
%

%%%=============================================================%%%
%%                           SECTION 4                           %%
%%%=============================================================%%%
\subsection{Invariant representation}
Obviously, the dimension of information lossless representation can 
be large, hence the size of DNNs might explode.
It is thus demanding to construct lower dimensional representations
that can serve the same purpose.
In this subsection, we adopt the framework proposed in 
\cite{achi:jmlr18} to develop geometric notions of 
invariant representations.
Let us set the feature map to be 
the canonical quotient map, i.e., $h := \pi$ and 
$\bm{\mathcal{Z}} := \bm{\mathcal{X}}^{*}$.

\begin{definition}[Invariant representation]
	Let $\bm{\mathcal{X}}$ be a data mani\-fold, and
	$f \colon \bm{\mathcal{X}} \to \bm{\mathcal{Y}}$
	be a task map.
	A feature map $h \colon \bm{\mathcal{X}} 
	\to \bm{\mathcal{X}}^{*}$ is \emph{invariant} for the 
	task map $f$, if $f$ is constant on all pre-image
	$h^{-1}([x])$ with $[x] \in \bm{\mathcal{X}}^{*}$.
\end{definition}

\noindent
We then adopt the classic results about quotient maps \cite{leej:book10} to 
our scenario of invariant representation learning.

\begin{proposition}%[Nuisance]
	Let $\bm{\mathcal{X}}$ and $\bm{\mathcal{Y}}$ be smooth 
	manifolds, and the task map $f \colon \bm{\mathcal{X}} \to \bm{\mathcal{Y}}$
	satisfy Assumption~\ref{ass:task}.
	Then $f$ induces a unique bijective latent map 
	$g \colon \bm{\mathcal{X}}^{*} \to \bm{\mathcal{Y}}$
	such that $f = g \circ h$.
	Furthermore, the task map $f$ is a quotient map, \emph{if
	and only if} the latent map $g$ is a homeomorphism.
\end{proposition}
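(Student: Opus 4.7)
The plan is to use the universal property of the quotient topology. I will first construct $g$ explicitly and check that it is a well-defined bijection, then obtain its continuity from the quotient nature of $h = \pi$, and finally establish the homeomorphism equivalence by chaining the open-set characterisations of quotient maps on the two sides.

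For existence, uniqueness and bijectivity, I would set $g([x]) := f(x)$ for every $[x] \in \bm{\mathcal{X}}^{*}$. Since $\sim$ was defined by $x \sim x' \Leftrightarrow f(x) = f(x')$, this definition is independent of the chosen representative, so $g$ is well-defined and automatically satisfies $f = g \circ h$. Uniqueness follows from surjectivity of $h = \pi$: any competing $g'$ with $f = g' \circ h$ must agree with $g$ on every equivalence class. Surjectivity of $g$ is inherited from the surjectivity of $f$ in Assumption~\ref{ass:task}, while injectivity is again immediate from the definition of $\sim$, since $g([x]) = g([x'])$ forces $f(x) = f(x')$ and hence $[x] = [x']$. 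Continuity of $g$ then comes essentially for free from the universal property of the quotient topology on $\bm{\mathcal{X}}^{*}$: a map out of $\bm{\mathcal{X}}^{*}$ is continuous if and only if its pre-composition with $\pi$ is continuous, and $g \circ \pi = f$ is continuous by Assumption~\ref{ass:task}.

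For the biconditional, the ``if'' direction is immediate because every homeomorphism is a quotient map and the composition of the two quotient maps $\pi$ and $g$ is again a quotient map. For the ``only if'' direction, $g$ is already bijective and continuous, so the task reduces to showing that $g^{-1}$ is continuous, i.e.\ that $U \subset \bm{\mathcal{Y}}$ is open whenever $g^{-1}(U)$ is open. I would chain the two quotient characterisations: $g^{-1}(U)$ is open in $\bm{\mathcal{X}}^{*}$ if and only if $\pi^{-1}(g^{-1}(U)) = f^{-1}(U)$ is open in $\bm{\mathcal{X}}$ by the quotient property of $\pi$, and the latter is equivalent to $U$ being open in $\bm{\mathcal{Y}}$ by the assumed quotient property of $f$. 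This closes the equivalence and yields the homeomorphism.

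The argument is a textbook instance of the universal property of quotient spaces, so there is no serious technical obstacle. The only point worth flagging is that defining $\bm{\mathcal{X}}^{*}$ as precisely the set of fibres of $f$ (rather than a coarser $f$-respecting partition) is exactly what upgrades $g$ from surjective to bijective, and hence what makes the conclusion a genuine homeomorphism rather than merely a quotient map.
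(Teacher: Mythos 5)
Your proof is correct and is precisely the standard ``passing to the quotient'' argument (universal property of the quotient topology plus the chained open-set characterisations) that the paper invokes by citing Lee's \emph{Introduction to Topological Manifolds} rather than proving the proposition itself. No gaps; your closing remark about the fibres of $f$ being exactly the equivalence classes is the right point to flag for why $g$ is bijective rather than merely surjective.
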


\noindent
Since a homeomorphic latent map $g$ implies the minimal dimension 
of the feature space $\bm{\mathcal{Z}}$, we conclude the following 
result.

\begin{corollary}
	Let $\bm{\mathcal{X}}$ and $\bm{\mathcal{Y}}$ be smooth 
	manifolds, and the task map $f \colon \bm{\mathcal{X}} \to \bm{\mathcal{Y}}$
	satisfy Assumption~\ref{ass:task}.
	If a feature map $h \colon \bm{\mathcal{X}} \to 
	\bm{\mathcal{X}}^{*}$ is both sufficient and 
	minimal,
	%
%	If $\operatorname{dim}X^{*}$ is minimal,
	then the task map $f$ is a quotient map.
\end{corollary}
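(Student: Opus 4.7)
The plan is to invoke the immediately preceding Proposition, which reduces the claim ``$f$ is a quotient map'' to the equivalent claim that the induced bijective latent map $g\colon \bm{\mathcal{X}}^{*}\to\bm{\mathcal{Y}}$ is a homeomorphism. Since $h=\pi$ by the setup of this subsection and $f$ satisfies Assumption~\ref{ass:task}, that Proposition already supplies a unique bijection $g$ with $f=g\circ h$, and the universal property of the quotient topology applied to the continuous composition $f=g\circ\pi$ immediately yields continuity of $g$. The entire problem therefore reduces to showing that $g^{-1}$ is continuous.

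For this, I would read the minimality hypothesis on $h$ as ``$\bm{\mathcal{X}}^{*}$ has the smallest dimension among all sufficient representations of $f$'', which is the reading suggested by the Remark preceding the Corollary. The task map itself gives the trivial sufficient decomposition $f=\operatorname{id}_{\bm{\mathcal{Y}}}\circ f$, whose feature space has dimension $\dim\bm{\mathcal{Y}}$; minimality of $h$ therefore forces $\dim\bm{\mathcal{X}}^{*}\le\dim\bm{\mathcal{Y}}$. The reverse inequality $\dim\bm{\mathcal{X}}^{*}\ge\dim\bm{\mathcal{Y}}$ follows from the fact that $g$ is a continuous surjection between manifolds (otherwise, e.g.\ by Sard's theorem, $\operatorname{Im}(g)$ would be meagre and could not cover $\bm{\mathcal{Y}}$). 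Hence the two dimensions coincide.

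Having matched dimensions, I would then invoke Brouwer's invariance of domain: a continuous bijection between topological manifolds of the same dimension is automatically an open map, and therefore a homeomorphism. Applying the preceding Proposition once more then concludes that $f$ is a quotient map.

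The main obstacle is that the notion of ``minimal'' is not formally defined in the excerpt; the preceding Remark only gestures informally at ``minimal dimension of the feature space''. A careful write-up would first pin down a precise definition (e.g.\ minimal among dimensions of all sufficient feature maps $\bm{\mathcal{X}}\to\bm{\mathcal{Z}}$ with $\bm{\mathcal{Z}}$ a manifold), so that the comparison against $\operatorname{id}_{\bm{\mathcal{Y}}}\circ f$ is valid. A secondary subtlety is that for $\dim\bm{\mathcal{X}}^{*}$ to be meaningful one needs $\bm{\mathcal{X}}^{*}$ to carry a manifold structure; this is tacit in the paper's framework and would typically be justified by requiring $f$ to have constant rank so that the quotient manifold theorem applies.
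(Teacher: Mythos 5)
The paper gives this corollary no proof at all beyond the one-line gloss preceding it (``a homeomorphic latent map $g$ implies the minimal dimension of the feature space''), so your overall strategy --- reduce via the preceding Proposition to showing that the induced bijection $g \colon \bm{\mathcal{X}}^{*} \to \bm{\mathcal{Y}}$ is a homeomorphism, with continuity of $g$ coming from the characteristic property of the quotient map $\pi$ --- is exactly the intended route, and you are right that the burden is continuity of $g^{-1}$. You are also right to flag that ``minimal'' is never formally defined and that $\bm{\mathcal{X}}^{*}$ need not a priori be a manifold; both are genuine gaps in the paper itself.

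However, two steps of your argument are shaky. First, the reverse inequality $\dim \bm{\mathcal{X}}^{*} \ge \dim \bm{\mathcal{Y}}$ cannot be obtained from Sard's theorem: $g$ is only known to be \emph{continuous}, and continuous surjections can raise dimension (space-filling curves), so Sard is unavailable; you would instead need the fact that a continuous \emph{injection} of an $m$-manifold into an $n$-manifold with $m<n$ has image with empty interior, which is a dimension-theory/invariance-of-domain statement, not a measure-theoretic one. Second, the entire dimension-counting detour (and with it the need for $\bm{\mathcal{X}}^{*}$ to be a topological manifold) can be bypassed: the paper's standing assumption makes the input space $\bm{\mathcal{X}}$ \emph{compact}, hence $\bm{\mathcal{X}}^{*} = \pi(\bm{\mathcal{X}})$ with the quotient topology is compact, and $\bm{\mathcal{Y}}$ is Hausdorff because it is a manifold; a continuous bijection from a compact space onto a Hausdorff space is automatically a closed map and therefore a homeomorphism. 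Applying the preceding Proposition then finishes the proof. Notably, this cleaner argument uses only sufficiency (automatic for $h=\pi$), surjectivity of $f$ from Assumption~\ref{ass:task}, and compactness --- the minimality hypothesis does no logical work, which is worth pointing out rather than trying to reverse-engineer a definition of ``minimal'' strong enough to carry the dimension comparison.
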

%
%\begin{remark}
%	The hint of this result is to guarantee the existence
%	of a deep Networks, because of the quotient.
%\end{remark}

%\begin{proposition}
%	Let $f \colon \bm{\mathcal{X}} \to \bm{\mathcal{Y}}$ be a 
%	surjective continuous map. 
%	%
%	Let $\bm{\mathcal{X}}^{*}$ be the following collection 
%	of subsets of $\bm{\mathcal{X}}$: 
%	%
%	\begin{equation}
%		\bm{\mathcal{X}}^{*} = g^{-1}[\{ y \}]: y \in \bm{\mathcal{Y}}
%	\end{equation}
%	%
%	Give $\bm{\mathcal{X}}^{*}$ the quotient topology via the canonical 
%	quotient map $\pi \colon \bm{\mathcal{X}} \to \bm{\mathcal{X}}^{*}$.
%	%
%	\begin{enumerate}
%		\item $f$ is constant on all fibers $\pi^{-1}[\{[x]\}]$ for $[x] \in X^{*}$
%		\item $f$ induces a unique bijective map $g \colon X^{*} \to Z$, 
%			such that $f = f \circ \pi$
%		\item $f$ is a quotient map iff $g \colon \bm{\mathcal{X}}^{*} 
%			\to \bm{\mathcal{Y}}$ is a homeomorphism
%	\end{enumerate}
%\end{proposition}

%%%%%%%%%%%%%%%%%%%%%%%%%%%%%%%%%%%%%%%%%%%%%%%%%%%%%%%%%%%%%%%%%%%
%%                           SECTION 4                           %%
%%%%%%%%%%%%%%%%%%%%%%%%%%%%%%%%%%%%%%%%%%%%%%%%%%%%%%%%%%%%%%%%%%%
\section{Experiments}
\label{sec:06}
In our experiments, all DNNs are trained in the batch learning setting.
The classic backpropagation algorithm and the approximate Newton's 
algorithm, proposed in \cite{shen:cvpr18}, are used for training 
DNNs.
Activation functions are all chosen to be Sigmoid.
The error function is a smooth approximation of the $\ell_{1}$ norm as 
$E(x,y):= \sqrt{\|x-y\|_{2}^{2} + \beta}$ with $\beta = 10^{-6}$.
%

%with step size L.

%%%=============================================================%%%
%%                           SECTION 4                           %%
%%%=============================================================%%%
\subsection{Learning as diffeomorphism}
In this experiment, we illustrate that the process of training DNNs 
is essentially deforming the data manifold diffeomorphically.
The task is the four region classification benchmark \cite{sing:nips89}.
In $\mathbb{R}^{2}$ around the origin, there is a square area $(-4,4)\times(-4,4)$, and 
three concentric circles with their radiuses being $1$, $2$, and $3$.
Four regions/classes are interlocked, nonconvex, as shown in Figure~\ref{fig:1a}.
\begin{figure*}[th!]
\centering
	\subfigure[$1000$ random samples]{\includegraphics[width=0.64\columnwidth]
		{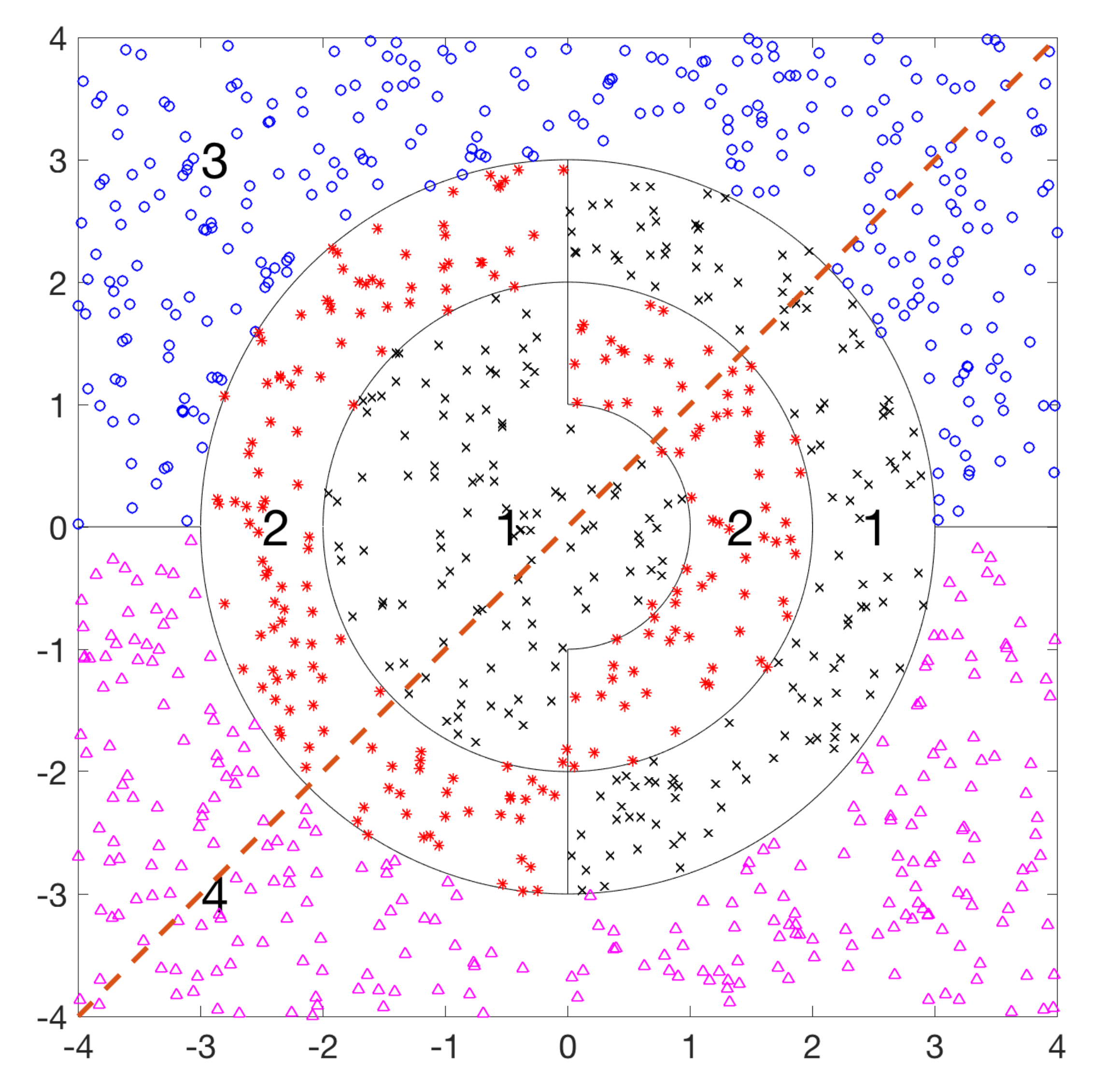}\label{fig:1a}} 
	\hspace{3mm}
	\subfigure[Tracking of outputs along the diagonal 
		(dash/solid: before/after convergence)]{\includegraphics[width=\columnwidth]
		{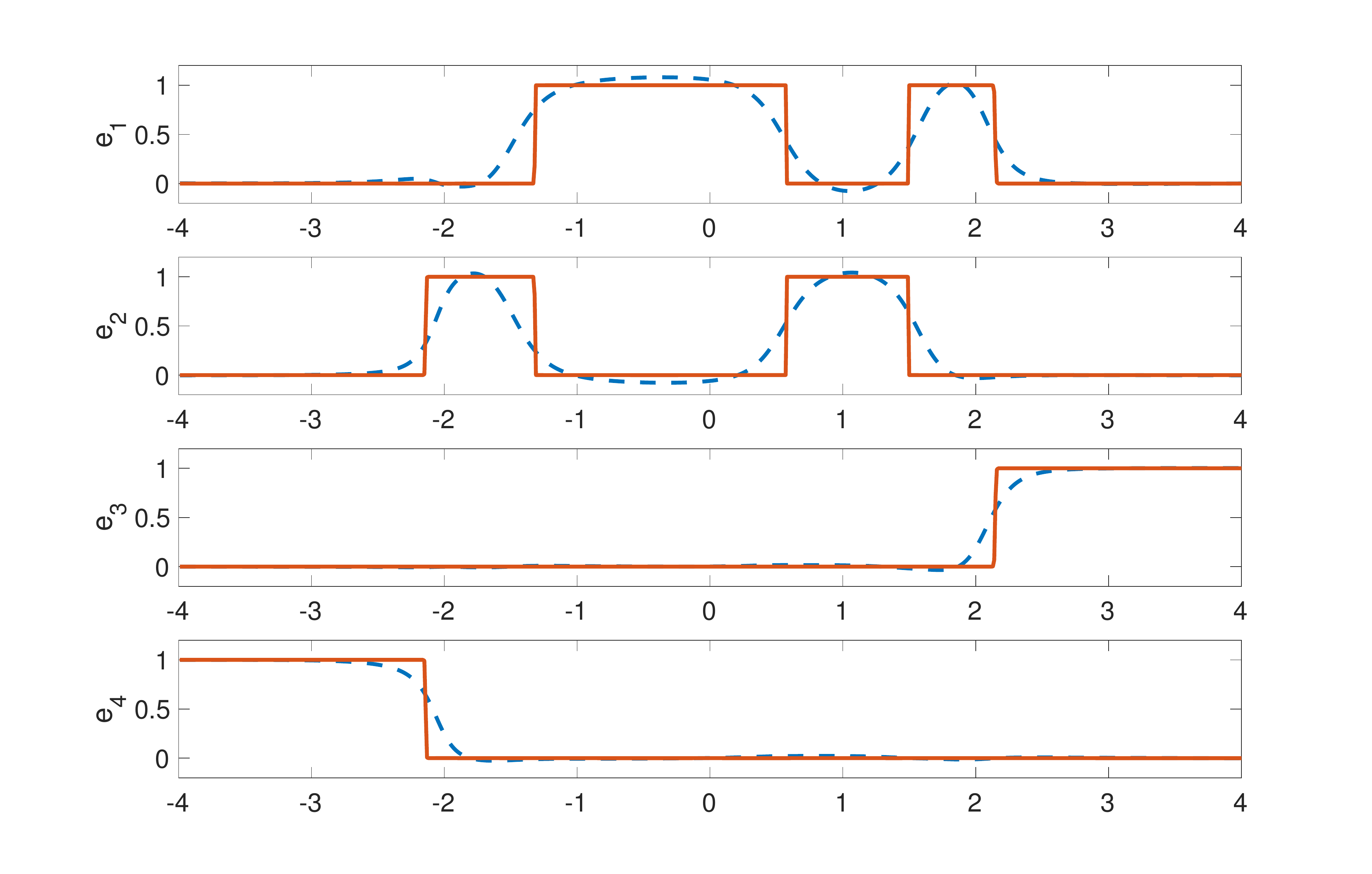}\label{fig:1b}} 
%	\vspace{-2mm}
	\caption{Illustration of smooth deformation in the four regions classification 
	problem.}
%	\vspace{-4mm}
	\label{fig:01} 
\end{figure*}
We randomly draw $T = 1000$ samples in the box for training, and specify the 
corresponding output to be the $i$-th basis vector in $\mathbb{R}^{4}$.
We deploy a four-layer DNN architecture $\bm{\mathcal{F}}(2,10,10,10,4)$.

We investigate the property of smoothly embedding
the $2D$ box/manifold into $\mathbb{R}^{4}$ via the specified DNN.
Since we cannot visualise a $4D$ structure, we track the values in 
each dimension of the output $\mathbb{R}^{4}$ along the diagonal (dashed) 
line (class transition $4\to2\to1\to2\to1\to3$, see Figure~\ref{fig:1a}).
Figure~\ref{fig:1b} shows two curves of DNN outputs along the diagonal 
in all four dimensions, where the dashed output curve (before convergence)
deforms smoothly to the final solid output curve (after convergence).
\begin{figure}[t!]
	\centering
	\subfigure[$a = 1$]{\includegraphics[width=0.75\columnwidth]
		{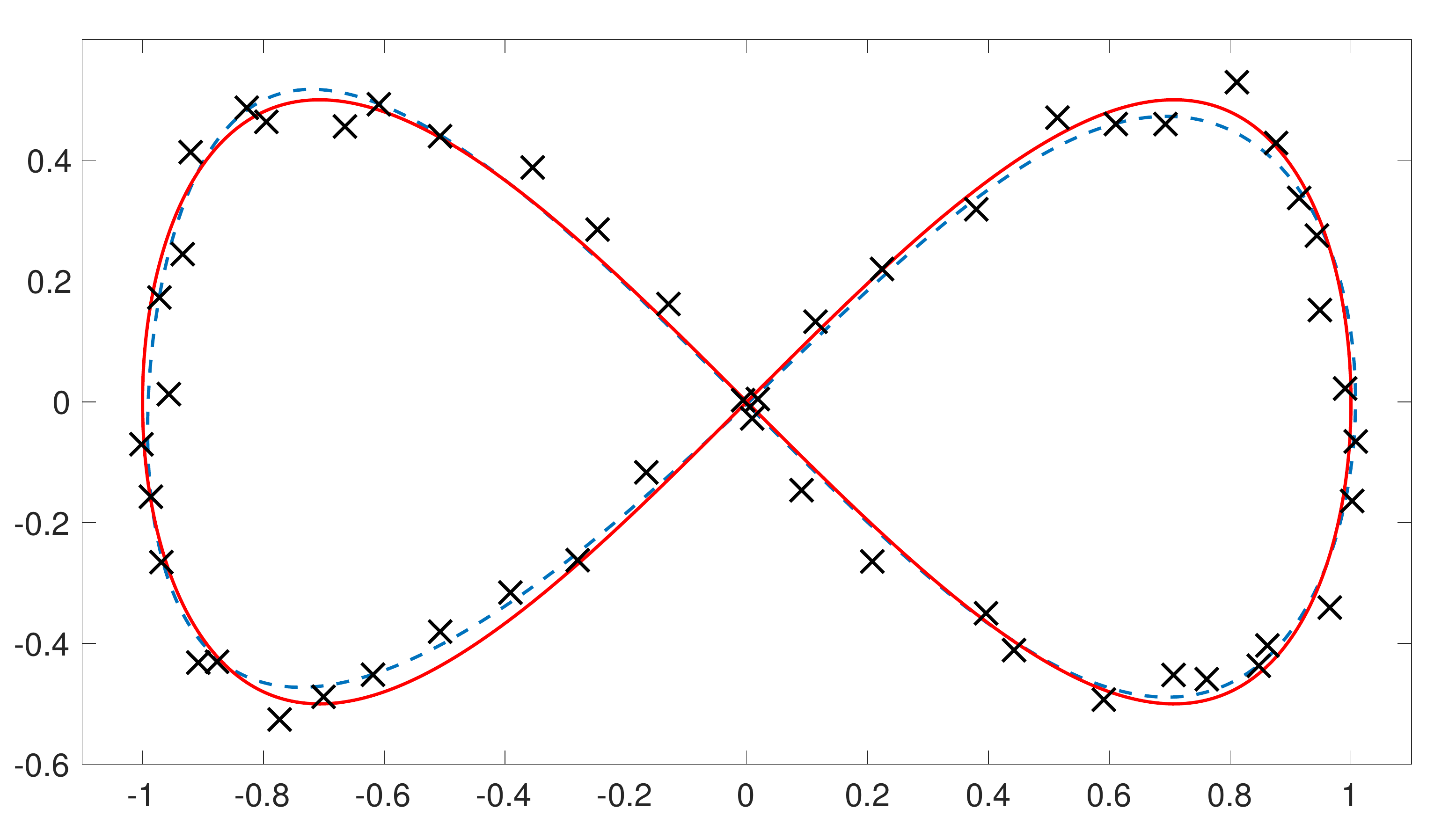}\label{fig:2a}} 
%	\vspace{-5mm}
	\subfigure[$a = 5$]{\includegraphics[width=0.75\columnwidth]
		{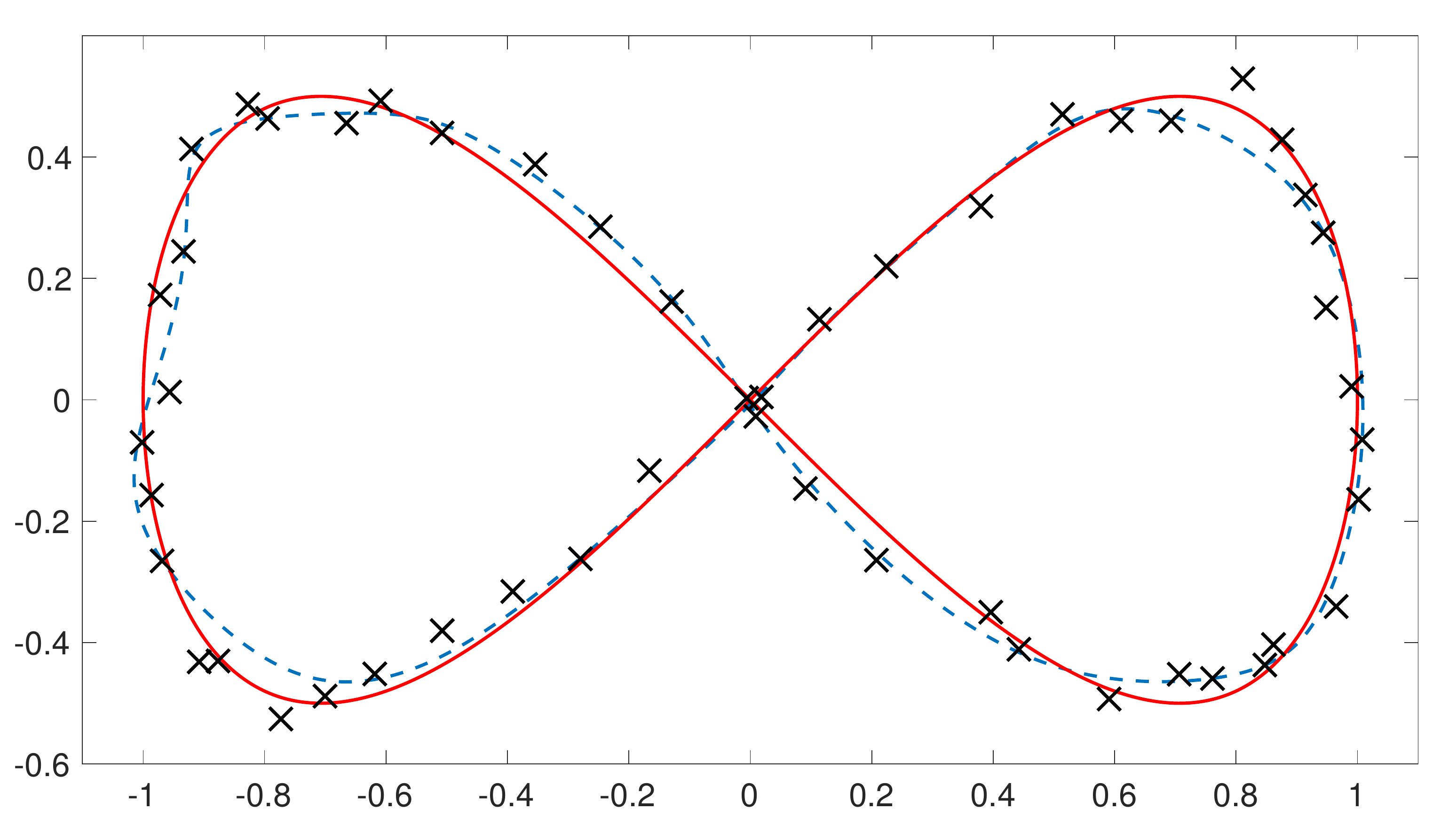}\label{fig:2b}}
	\subfigure[$a = 10$]{\includegraphics[width=0.75\columnwidth]
		{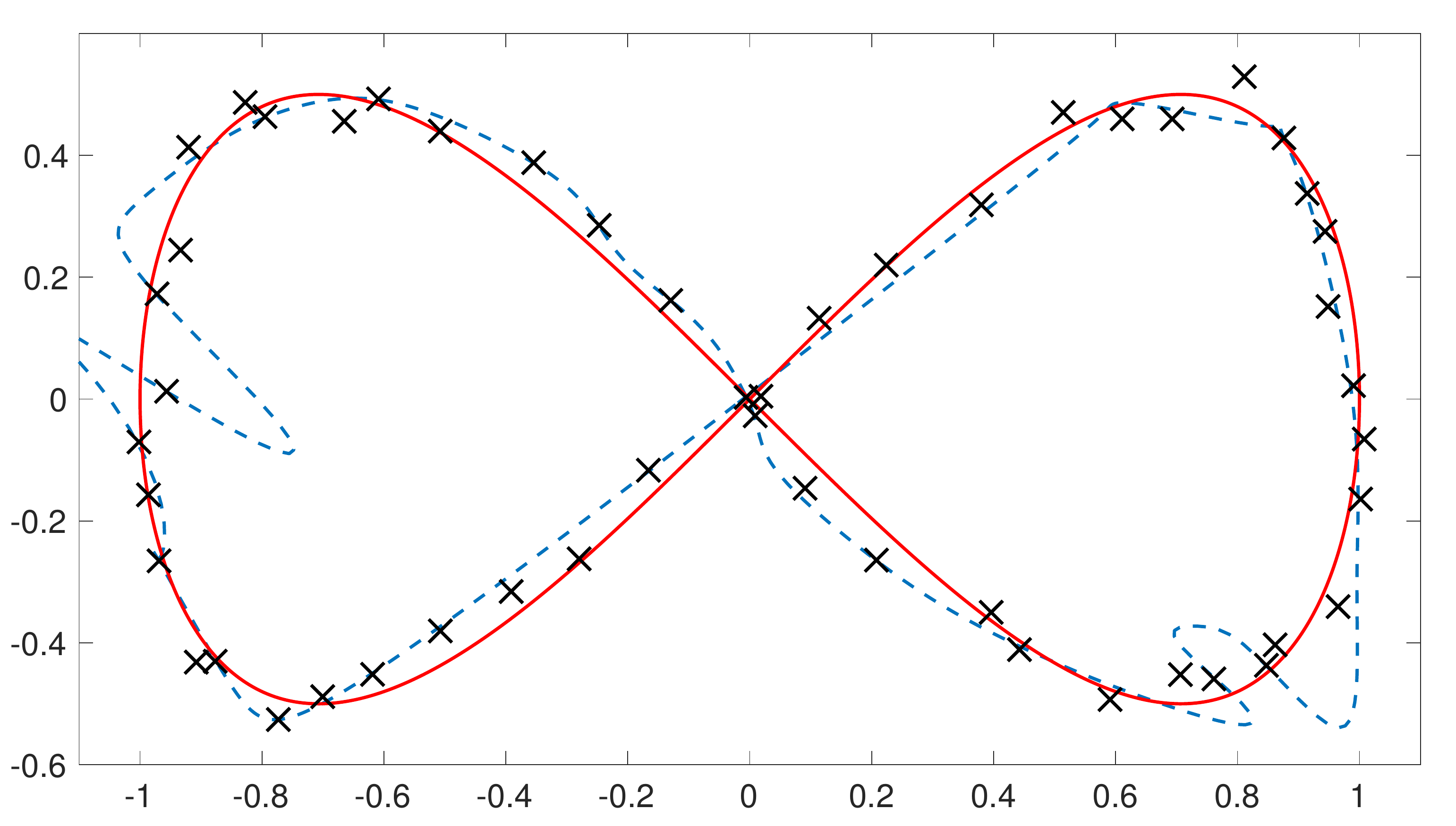}\label{fig:2c}}
	\caption{Implicit regularisation for generalisation (red solid: 
	ground truth figure eight; blue dashed: learned curve).}
	\vspace{-3mm}
	\label{fig:02} 
\end{figure}

%%%=============================================================%%%
%%                           SECTION 4                           %%
%%%=============================================================%%%
\subsection{Implicit regularisation}
In this subsection, we investigate the results derived in Section~\ref{sec:04}
about regularisation for generalisation.
Since there have been enormous works about the effects of 
explicit regularisation, in this experiment we focus only on the 
implicit regularisation.

The task  is to learn a map from a unit circle 
$\bm{\mathcal{X}} := S^{1}$ to the two-petal rose 
$\bm{\mathcal{Y}} := R^{1}$, a.k.a. the figure eight curve.
Note, that the former is a smooth manifold, while the latter is not
a manifold due to the intersection at the origin.
We draw $51$ points equally placed on the circle, and perturb them
with a uniform noise in a square region $[-0.05, 0.05] \times [-0.05, 0.05]$.
A three-layer DNN architecture $\bm{\mathcal{F}}(2,10,10,2)$ is used.
The parameterised Sigmoid function and its derivative are defined 
as
\begin{equation}
	\sigma(x) \!:=\! \frac{1}{1+e^{-a x}}, \text{~~~and~}
	\sigma'(x) \!:=\! a\;\!\sigma(x) \big( 1 - \sigma(x) \big).
\end{equation}
In our experiment, we choose the constant $a \in \{1, 5, 10\}$,
which controls the largest slope of the Sigmoid function.
Figure~\ref{fig:02} depicts the learned curve against the ground truth,
and suggests that the performance of generalisation decreases with an 
increasing maximal slope $a$.
Clearly, large slopes of activation functions encourage overfitting.

%that we design to demonstrate implicit regulari\-sation through
%the slope of activations

%%%=============================================================%%%
%%                           SECTION 4                           %%
%%%=============================================================%%%
\subsection{Deep representation learning}
In this experiment, we aim to investigate the findings in Section~\ref{sec:05}
about the DNN architecture in connection with generali\-sation.
The task is to map a Swiss roll (input manifold) with an arbitrary orientation to a 
unit circle $\bm{\mathcal{X}} := S^{1}$ (output manifold).
Specifically, we define the input manifold as
\begin{equation}
	\bm{\mathcal{X}}(Q) := \!\left\{\! Q \!\! \left.
	\begin{bmatrix}
		t \cos(t) \\
		t \sin(t) \\
		r
	\end{bmatrix}
	\right| t \in [0, 2\pi), r \in [-1,1]
	\right\},
\end{equation}
where $Q \in \mathbb{R}^{3 \times 3}$ is an arbitrary orthogonal matrix.
We randomly draw $T = 500$ samples on the Swiss roll
for training, and another $1000$ samples for testing.
We compare two DNN architectures, namely, one being a four-layer FNN 
$\bm{\mathcal{F}}(3,10,10,10,2)$ and the other being 
a five-layer FNN $\bm{\mathcal{F}}(3,10,10,1,10,2)$.
The former tends to learn information-lossless representations, while 
the later places a bottleneck to capture invariant representations.
The deep FNN has only one more neuron than the
shallow one.

We apply the two trained FNNs on $1000$ testing samples.
Figure~\ref{fig:03} shows the box plot of the $\ell_{2}$ norm
of prediction errors.
Clearly, the shallow FNN (left), which learns information-lossless 
representations, outperforms only slightly the deep FNN with a bottleneck (right),
in terms of mean value, variance, and tail.
However, we argue that such a difference is due to the difficulty 
in training the deep FNN with an extremely narrow bottleneck.

\begin{figure}[t!]
	\centering
	\includegraphics[width=\columnwidth]{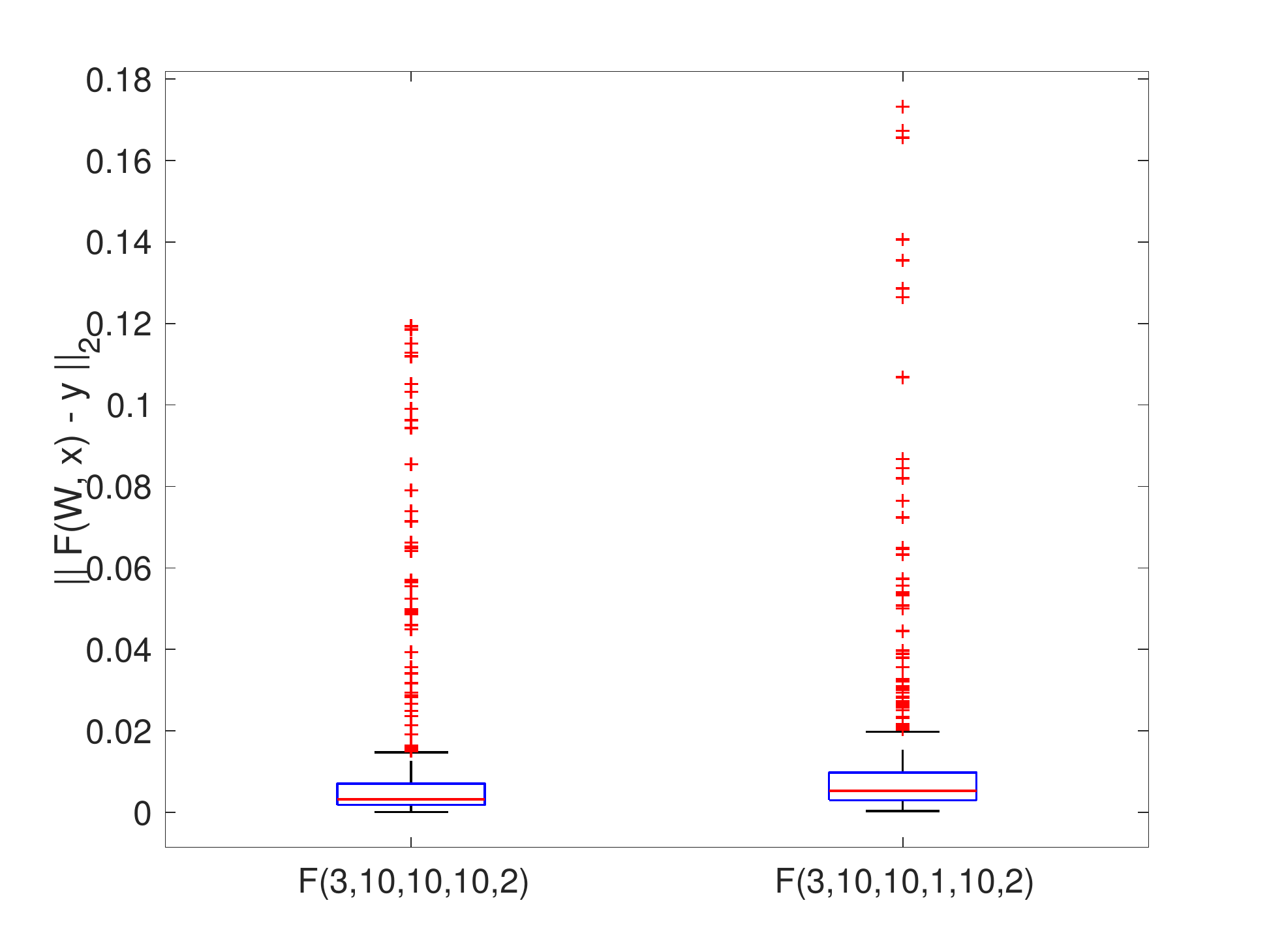}
	\caption{Box plot of generalisation errors (information lossless 
	representation vs invariant representation).}
	\label{fig:03} 
\end{figure}

%%%%%%%%%%%%%%%%%%%%%%%%%%%%%%%%%%%%%%%%%%%%%%%%%%%%%%%%%%%%%%%%%%%
%%                           SECTION 7                           %%
%%%%%%%%%%%%%%%%%%%%%%%%%%%%%%%%%%%%%%%%%%%%%%%%%%%%%%%%%%%%%%%%%%%
\section{Conclusion}
\label{sec:07}
In this work, we provide a differential topological perspective on 
four challenging problems of learning with DNNs, namely,
\emph{expressi\-bility}, \emph{optimisability}, \emph{generalisability},
and \emph{architecture}.
By modelling the dataset of interest as a smooth manifold, DNNs 
are considered as compositions of smooth maps of smooth manifolds.
Our results suggest that differential topological instruments
are native for understanding and analysing DNNs.
We believe that a thorough investigation of differential topological
theory of DNNs will bring new knowledge and methodologies in
the study of deep learning.

%{\small
%\bibliographystyle{ieee}
%\bibliography{/Users/hshen/Arbeit/library/shenbib/shenbib}
%}

\end{document}